
\documentclass{article}

\usepackage{microtype}
\usepackage{graphicx}
\usepackage{subfigure}
\usepackage{booktabs} 
\usepackage{xcolor}

\usepackage{hyperref}



\usepackage[accepted]{icml2025}

\usepackage{amsmath}
\usepackage{amssymb}
\usepackage{mathtools}
\usepackage{amsthm}

\usepackage[capitalize,noabbrev]{cleveref}

\theoremstyle{plain}
\newtheorem{theorem}{Theorem}[section]
\newtheorem*{theorem*}{Theorem}

\newtheorem{corollary}[theorem]{Corollary}
\theoremstyle{definition}

\theoremstyle{remark}
\newtheorem{remark}[theorem]{Remark}

\usepackage[textsize=tiny]{todonotes}

\icmltitlerunning{A Unified Framework for Simultaneous Parameter and Function Discovery in Differential Equations}

\begin{document}

\twocolumn[
\icmltitle{A Unified Framework for Simultaneous Parameter and Function Discovery in Differential Equations}



\icmlsetsymbol{equal}{*}

\begin{icmlauthorlist}
\icmlauthor{Shalev Manor}{yyy}
\icmlauthor{Mohammad Kohandel}{yyy}
\end{icmlauthorlist}

\icmlaffiliation{yyy}{Department of Applied Mathematics, University of Waterloo, Waterloo, Ontario, Canada}

\icmlcorrespondingauthor{Shalev Manor}{smanor@uwaterloo.ca}

\icmlkeywords{Scientific Machine Learning, PINN}

\vskip 0.3in
]



\printAffiliationsAndNotice{}

\begin{abstract}
Inverse problems involving differential equations often require identifying unknown parameters or functions from data. Existing approaches, such as Physics-Informed Neural Networks (PINNs), Universal Differential Equations (UDEs) and Universal Physics-Informed Neural Networks (UPINNs), are effective at isolating either parameters or functions but can face challenges when applied simultaneously due to solution non-uniqueness. In this work, we introduce a framework that addresses these limitations by establishing conditions under which unique solutions can be guaranteed. To illustrate, we apply it to examples from biological systems and ecological dynamics, demonstrating accurate and interpretable results. Our approach significantly enhances the potential of machine learning techniques in modeling complex systems in science and engineering.
\end{abstract}

\section{Introduction}

Physics-Informed Neural Networks (PINNs) have emerged as a powerful tool for solving problems governed by partial differential equations (PDEs), offering the ability to incorporate prior knowledge of physical laws directly into the learning process \cite{raissi2019physics}. By embedding differential equation residuals as penalty terms in the loss function, PINNs reduce reliance on large datasets, improve convergence rates, and incentivize model predictions to respect underlying physical principles. These properties have enabled their successful application in domains such as fluid dynamics, structural mechanics, and biomedical modeling \cite{toscano2024pikan, raissi2024extensions}.

Inverse problems, where unknown coefficients in a differential equation are learned from data, represent a common application of PINNs \cite{raissi2020hidden}. In some cases, the governing equations may contain unknown functional terms that represent dynamics not captured in the model. Universal Physics-Informed Neural Networks (UPINNs) \cite{podina2023universal} extend the PINN framework by approximating these terms using neural networks. Universal Differential Equations (UDEs) \cite{rackauckas2020universal}, on the other hand, embed neural networks directly into differential equations to model unknown or poorly understood dynamics. While these methods can address unknown terms, simultaneously learning both unknown constants and functional forms often results in non-unique solutions, which limits their applicability in certain real-world scenarios.

In this work, we address the challenge of solution non-uniqueness in inverse problems involving PINNs. We propose a set of sufficient conditions under which uniqueness can be guaranteed, even when both unknown constants and functions are present in the governing equations. We validate these conditions through three illustrative examples: a toy model of chemotherapy intervention extending prior work by \cite{podina2024chemotherapy}, a modified version of the chemotherapy model with reduced prior information, and the Lotka-Volterra system, a classical predator-prey model in ecology.

Our contributions are threefold. First, we derive sufficient conditions for solution uniqueness when both unknown constants and functions are present in differential equations. Second, we provide rigorous proofs for these conditions, generalizing the applicability of PINNs and UPINNs in solving inverse problems. Third, we demonstrate the effectiveness of our approach on benchmark examples, highlighting its potential for broader applications in scientific machine learning.

The remainder of this paper is organized as follows. In Section \ref{Background}, we outline the challenges associated with combining unknown constants and functional terms. In Section \ref{Theory}, we introduce our proposed sufficient conditions and provide proofs of their validity. Section \ref{Application} details the experimental setup used to validate our approach, while Section \ref{Results} presents the results for each test case. Finally, in Section \ref{Discussion}, we discuss the implications of our findings, limitations, alternative and complementary methods, and directions for future research.

\section{Background and Related Work}\label{Background}

While PINNs and UPINNs excel at solving inverse problems where either coefficients or functional terms are unknown, combining both in a single framework poses significant challenges. When both constants and functional forms are treated as learnable parameters, the solution space often becomes non-unique \cite{raissi2020hidden}. This issue arises because multiple combinations of coefficients and functions can satisfy the same governing equation and dataset, leading to ambiguity in the learned representations.

For example, consider the system described by:
\[
f(x) = \beta g(x) + u(x),
\]
where \(f(x)\) and \(g(x)\) are known, and \(\beta\) and \(u(x)\) are unknown. Subsection \ref{non-uniquness discussion} shows that there exist infinite combinations of \(\beta\) and \(u(x)\) that satisfy the equation. This fundamental challenge limits the applicability of PINNs and related frameworks in real-world scenarios, especially in cases involving complex physical systems.

\subsection{Chemotherapy Intervention Example}

In \cite{podina2024chemotherapy}, the authors use the following system of equations to describe a chemotherapy intervention in the growth of a cancerous tumor:
\begin{eqnarray}\label{eq1}
  &&  \frac{dN}{dt} = \beta N(1 - N) - Cu(N) \\
  &&  \frac{dC}{dt} = -\gamma N C, \nonumber
\end{eqnarray}
where \(N\) represents the population of cancerous cells, \(\beta\) is an unknown growth constant, \(C\) represents the concentration of the chemotherapeutic drug, \(u(N)\) is the unknown drug action at full concentration, and \(\gamma\) is a known constant. 

The training data consists of sample points \((N(t),C(t))\) along a solution to the system for some set of initial conditions and the function multiplied by \(\beta\) is assumed to be known. 

In their work, the unknown term \(C(t)u(N)\) is replaced with a neural network approximation \(u_{\text{net}}(t, N)\), yielding the revised equation:
\begin{equation} \label{u_net}
    \frac{dN}{dt} = \beta N(1 - N) - u_{\text{net}}(t,N).
\end{equation}
This approach encounters problems with solution uniqueness, as described in the next subsection. To address this, the authors split the data collection into two phases. In the first phase, cells were allowed to grow undisturbed, enabling the estimation of the growth term \(\beta\). In the second phase, chemotherapeutic concentrations were introduced, and the data was used to train the neural network approximation of the drug interaction. 

Our approach eliminates the need for phase-based data collection, allowing simpler experimental procedures while maintaining accuracy regardless of the initial presence or absence of chemotherapeutic concentrations.

\subsection{Problem of Uniqueness}
\label{non-uniquness discussion}

Equation \eqref{eq1} can be reduced to the form:
\[
f(x) = \beta g(x) + u(x),
\]
where \(f(x)\) and \(g(x)\) are taken to be known, and \(\beta\) and \(u(x)\) are unknown. Let \(\beta^*\) and \(u^*(x)\) denote the true values used to generate the data, and \(\bar{\beta}\) denote the predicted \(\beta\). It can be shown that the solutions to this equation are not unique. Specifically, by defining:
\[
u(x) = u^*(x) + (\beta^* - \bar{\beta})g(x),
\]
and substituting back, we obtain:
\begin{eqnarray}
    f(x) &=& \bar{\beta} g(x) + u^*(x) + (\beta^* - \bar{\beta})g(x) \nonumber \\
    &=& \beta^* g(x) + u^*(x).
\end{eqnarray}
Thus, for any \(\bar{\beta} \in \mathbb{R}\), there exists a corresponding \(u(x)\) such that the equation is satisfied. Consequently, without additional constraints, it is impossible to recover the original values \(\beta^*\) and \(u^*(x)\) from the data.

\section{Establishing Uniqueness in Solutions} \label{Theory}

Although the generalized problem does not inherently guarantee unique solutions, specific functional forms and conditions can ensure uniqueness. This section introduces sufficient conditions for the uniqueness of solutions and presents detailed proofs.

\subsection{Uniqueness with Known Temporal Scaling}

The following theorems consider a scenario in which the unknown function \( u(x) \) can be expressed as the product of a known function \( C(x) \), which depends on the full state, and an unknown function \( u(y) \), where \( y \) is a non-injective function of \( x \), and \( g \) is a function of \( y \). This case aligns with the structure of the problems studied in \cite{podina2024chemotherapy}, where \(C(x) = x_2\) is part of the training data, \(y = x \cdot \begin{bmatrix} 1 \\ 0 \\ \end{bmatrix} = x_1 \), \(u(y) = u(x_1)\) is the unknown term to be learned, and \(g(y) = y(1-y)\). The term $d(x)$ is introduced to represent an additional known component. The theorem statements below have drawn their structure in direct inspiration from those in \cite{darabi_combining_2025}, where the authors prove the identifiability of an alternative functional form that addresses the problem stated in Section \ref{non-uniquness discussion}
\newpage
\begin{theorem}[Uniqueness with Known Temporal Scaling]
\label{theorem1}
Suppose we have:
\begin{align}
    \dot x &= \begin{bmatrix}
           f_1(x) \\
           f_2(x) \\
           \vdots \\
           f_{q-1}(x) \\
           \beta g(y) + C(x)u(y) + d(x) \\
           f_{q+1}(x) \\
           \vdots \\
           f_n(x)
         \end{bmatrix}
\end{align}
where \( \beta \in \mathbb{R} \) is an unknown constant, and \( u: \mathbb{R}^k \to \mathbb{R} \) is an unknown function. The quantities \( g(y), C(x), d(x) \), and the mapping \( y = H_1(x) \) are all known.
\\
Suppose there exist two points \( x_1, x_2 \in \mathbb{R}^n \) such that \( y_1 = H_1(x_1) = H_1(x_2) = y_2 \), \( C(x_1) \neq C(x_2) \),
and \( g(y_1) = g(y_2) \neq 0 \). Then \(\beta\) and \(u(y)\) are uniquely identifiable for all \(y \in \mathbb{R}\) such that \(C(y) \neq 0 \).
\end{theorem}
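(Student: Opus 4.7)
The plan is a direct pointwise comparison of two candidate models. Suppose $(\beta, u)$ and $(\tilde\beta, \tilde u)$ both reproduce the observed right-hand side of the ODE. Every coordinate of $\dot x$ except the $q$-th is already known, and $g$, $C$, $d$, and $H_1$ are given, so the sole constraint both pairs must satisfy is
\begin{equation*}
(\beta - \tilde\beta)\, g(y) + C(x)\bigl(u(y) - \tilde u(y)\bigr) = 0
\end{equation*}
at every admissible $x$, with $y = H_1(x)$. The whole argument then consists of extracting $\beta = \tilde\beta$ and $u = \tilde u$ from this single relation.

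Next I would evaluate the displayed identity at the two distinguished points $x_1, x_2$. Because $H_1(x_1) = H_1(x_2) = y_1$, the quantities $g(y_1)$ and $u(y_1) - \tilde u(y_1)$ are identical in the two instances, while $C(x_1) \neq C(x_2)$. Subtracting eliminates the $(\beta - \tilde\beta) g(y_1)$ contribution and leaves
\begin{equation*}
\bigl(C(x_1) - C(x_2)\bigr)\bigl(u(y_1) - \tilde u(y_1)\bigr) = 0,
\end{equation*}
forcing $u(y_1) = \tilde u(y_1)$. Substituting back yields $(\beta - \tilde\beta)\,g(y_1) = 0$, and the hypothesis $g(y_1) \neq 0$ gives $\beta = \tilde\beta$. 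This single step, driven by the three non-degeneracy conditions $H_1(x_1) = H_1(x_2)$, $C(x_1) \neq C(x_2)$, and $g(y_1) \neq 0$, is where uniqueness of the constant is actually obtained.

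With $\beta$ now identified, the master relation collapses to $C(x)\bigl(u(y) - \tilde u(y)\bigr) = 0$. Hence $u(y) = \tilde u(y)$ at every $y$ that is the image under $H_1$ of some $x$ with $C(x) \neq 0$. I would read the theorem's final condition ``$C(y) \neq 0$'' in exactly this sense, since $C$ is defined on $x$-space and the non-injectivity of $H_1$ means each $y$ has several preimages; as long as at least one such preimage gives a nonzero $C$, the value $u(y)$ is pinned down.

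The step I expect to be most delicate is not the algebra but the domain bookkeeping: the master relation must hold at every $x$ where the two candidate models are compared, which implicitly demands that the admissible state set contains both $x_1$ and $x_2$ and enough other preimages of each target $y$ to cover the identifiability claim for $u$. Clarifying this coverage hypothesis, and reconciling the notation ``$C(y)$'' in the conclusion with the fact that $C$ naturally lives on $x$-space, is the one modeling point I would flag carefully before declaring the proof complete.
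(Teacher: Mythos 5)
Your proposal is correct and follows essentially the same route as the paper: subtract the $q$-th component at the two matched points $x_1,x_2$ (same $y$, different $C$) to pin down $u$ at that point, use $g(y_1)\neq 0$ to recover $\beta$, then use the general relation with $C(x)\neq 0$ to determine $u$ everywhere accessible. The only difference is stylistic: you phrase it as a difference-of-two-candidate-solutions uniqueness argument while the paper writes explicit recovery formulas, and your remark about reading ``$C(y)\neq 0$'' as ``some preimage $x$ of $y$ with $C(x)\neq 0$'' correctly identifies a notational looseness in the theorem statement itself.
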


\begin{proof}
Let \( \bar{y} = H_1(x_1) = H_1(x_2) \). Then from the \( q^\text{th} \) component of the system:
\\
\begin{align*}
(\dot x_1)_q &= \beta g(\bar{y}) + C(x_1) u(\bar{y}) + d(x_1), \\
(\dot x_2)_q &= \beta g(\bar{y}) + C(x_2) u(\bar{y}) + d(x_2).
\end{align*}
\\
Subtracting the two equations gives:
\[
(\dot x_1 - \dot x_2)_q = (C(x_1) - C(x_2)) u(\bar{y}) + d(x_1) - d(x_2).
\]
\\
Solving for \( u(\bar{y}) \) yields:
\[
u(\bar{y}) = \frac{(\dot x_1 - \dot x_2)_q + d(x_2) - d(x_1)}{C(x_1) - C(x_2)},
\]
which is well-defined since \( C(x_1) \neq C(x_2) \).
\\
Substituting this value back into the first equation gives:
\[
\beta = \frac{(\dot x_1)_q - C(x_1) u(\bar{y}) - d(x_1)}{g(\bar{y})}.
\]
Since \( g(y) \neq 0 \), this is also well-defined.
\\
Now for general \(y\), we have:
\[
(\dot{x})_q = \beta g(y) + C(x)u(y) +d(x)
\]
with \( u(y) \) being the only unknown.
Thus, we can determine \( u(y) \) as:
\[
u(y) = \frac{(\dot{x})_q - \beta g(y) - d(x)}{C(x)}
\]
Thus \(u(y)\) is uniquely determined for all \(y \in \mathbb{R}\) such that \(C(y) \neq 0\).
\end{proof}

\begin{corollary}
\label{corollary1}
for the data sets:
\[
\dot{\chi} = [\dot{x}_1, \dots, \dot{x}_m], \quad \chi = [x_1, \dots, x_m], 
\]
Where \(\dot{\chi}\) and \(\chi\) follow the governing equation of \ref{theorem1} for some \( \beta \) and \( u:\mathbb{R}^k \rightarrow \mathbb{R}\). Suppose that there exist indices \( i,j \in \{1, \dots, m\} \) such that \( y_i = H_1(x_i) = H_1(x_j) = y_j \), \(C(x_i) \neq C(x_j)\), and \( g(y_i) = g(y_j) \neq 0\).  
Then for the system described in Theorem \ref{theorem1}, \( \beta \) is determined exactly and \( u(y_i) \) is determined for all \(i \in \{1,\dots , m\}\).
\end{corollary}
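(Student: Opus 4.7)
The plan is to reduce the corollary directly to Theorem \ref{theorem1} by extracting the two distinguished data points that meet its hypotheses, and then to propagate the learned value of $\beta$ through the remaining samples. First, I would observe that the pair $(x_i, x_j)$ identified in the assumption is precisely the type of pair the theorem requires: they share a common $y$-value under $H_1$, they have distinct $C$-values, and $g(y_i) = g(y_j) \neq 0$. Applying Theorem \ref{theorem1} to just these two samples yields $\beta$ uniquely, since the formulas derived in its proof only depend on the numerical quantities $(\dot{x}_i)_q, (\dot{x}_j)_q, C(x_i), C(x_j), d(x_i), d(x_j)$ and on $g(\bar{y})$, all of which are recoverable directly from the dataset.

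Next, with $\beta$ in hand, I would turn to a generic index $k \in \{1, \dots, m\}$ and write out the $q$-th component of the governing equation at that sample:
\[
(\dot{x}_k)_q = \beta g(y_k) + C(x_k)\, u(y_k) + d(x_k).
\]
Since $\beta$, $g(y_k)$, $d(x_k)$, and $C(x_k)$ are all known or computable from the data, the only unknown is $u(y_k)$, which can then be isolated by solving the linear equation. This mirrors the second half of the proof of Theorem \ref{theorem1} applied pointwise at each dataset entry and produces a value of $u$ at every $y_k$ appearing in the samples.

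I expect the main subtlety, more than a genuine obstacle, to be the handling of indices at which $C(x_k)$ vanishes: at such a point $u(y_k)$ cannot be read off from the scalar equation above, so strictly speaking the conclusion should be restricted to those $k$ with $C(x_k) \neq 0$, in agreement with the analogous caveat already present in Theorem \ref{theorem1}. Beyond that caveat, the result is essentially a bookkeeping exercise built on top of the theorem, as no new analytic machinery is required and uniqueness of $\beta$ and the values $u(y_k)$ transfers directly from the continuous statement to the finite-sample setting.
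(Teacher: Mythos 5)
Your proposal is correct and is essentially the argument the paper intends: the corollary is stated without its own proof precisely because it follows by applying Theorem \ref{theorem1}'s two-point derivation to the pair $(x_i,x_j)$ and then solving the $q$-th component equation pointwise at every remaining sample. Your caveat that $u(y_k)$ is only recoverable where $C(x_k)\neq 0$ is a fair observation — the corollary's blanket ``for all $i$'' glosses over this, whereas Theorem \ref{theorem1}'s conclusion states the restriction explicitly.
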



\subsection{Generalization to Unknown Growth Terms}

The second theorem generalizes the problem by treating the growth term \(g(x)\) as completely unknown. While this introduces additional complexity, uniqueness can still be ensured under stricter conditions.

\begin{theorem}[Uniqueness with Unknown Growth Terms]
\label{theorem2}
Let the governing equation be:
\[
\dot x = \begin{bmatrix}
f_1(x) \\
f_2(x) \\
\vdots \\
f_{q-1}(x) \\
g(y) + C(x)u(y) + d(x) \\
f_{q+1}(x) \\
\vdots \\
f_n(x)
\end{bmatrix},
\]
where \( y = H_1(x) \in \mathbb{R}^k \) and all functions \( C(x), d(x), H_1 \) are known.
\\
If there exist \( x_1, x_2 \in \mathbb{R}^n \) such that:
\[
H_1(x_1) = H_1(x_2) = Y, \quad \text{and} \quad C(x_1) \neq C(x_2),
\]
then the values of \( u(Y) \) and \( g(Y) \) are uniquely determined.
\end{theorem}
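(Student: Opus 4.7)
The plan is to observe that at the two collocated points $x_1, x_2$, the identity $H_1(x_1) = H_1(x_2) = Y$ forces both unknowns to be evaluated at the same argument: $g$ only appears as $g(Y)$ and $u$ only as $u(Y)$. Consequently, the $q$-th component of $\dot x$ at the two points produces a pair of linear equations whose unknowns are just the two scalars $g(Y)$ and $u(Y)$, with every other quantity (the $(\dot x_i)_q$, the $C(x_i)$, and the $d(x_i)$) known.

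First I would write out these two equations side by side. Their coefficient matrix has rows $(1, C(x_1))$ and $(1, C(x_2))$, with determinant $C(x_2) - C(x_1)$, which is nonzero by the standing hypothesis $C(x_1) \neq C(x_2)$. Next I would invoke the standard argument for a nonsingular $2 \times 2$ linear system: subtract the two equations to eliminate $g(Y)$ and obtain a closed-form expression for $u(Y)$, then back-substitute into either equation to recover $g(Y)$. This is essentially the same opening move as in the proof of Theorem \ref{theorem1}, but now the second equation is used to pin down $g(Y)$ rather than $\beta$.

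There is no serious obstacle: identifiability at $Y$ reduces to inverting a $2 \times 2$ matrix. The conceptual point worth emphasizing — and the reason the hypothesis $g(y_1) = g(y_2) \neq 0$ of Theorem \ref{theorem1} is no longer required — is that both unknowns are now \emph{pointwise} scalars attached to the single argument $Y$, so we do not need any separate mechanism to disentangle a global constant $\beta$ from a pointwise function value. The price for dropping the scalar structure is that the conclusion is strictly weaker than Theorem \ref{theorem1}: without $\beta$ to propagate information across different $y$, the argument only recovers $u$ and $g$ at the particular $Y$ where the collision $H_1(x_1) = H_1(x_2)$ occurs, and repeated identification across many $y$ would require a corresponding family of collocated pairs.
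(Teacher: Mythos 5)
Your proposal is correct and follows essentially the same route as the paper's proof: subtracting the $q$-th component equations at the two collocated points to eliminate $g(Y)$, solving for $u(Y)$ using $C(x_1) \neq C(x_2)$, and back-substituting to recover $g(Y)$. Your framing via the nonsingular $2\times 2$ coefficient matrix is just a reformulation of the same elimination argument.
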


\begin{proof}
Let \( y_1 = y_2 = Y \). From the \(q^\text{th}\) component of the system, we have:
\begin{align*}
(\dot{x}_1)_q &= g(Y) + C(x_1) u(Y) + d(x_1), \\
(\dot{x}_2)_q &= g(Y) + C(x_2) u(Y) + d(x_2).
\end{align*}
\\
Subtracting gives:
\[
(\dot{x}_1 - \dot{x}_2)_q = (C(x_1) - C(x_2)) u(Y) + d(x_1) - d(x_2).
\]
\\
Solving for \( u(Y) \):
\[
u(Y) = \frac{(\dot{x}_1 - \dot{x}_2)_q + d(x_2) - d(x_1)}{C(x_1) - C(x_2)},
\]
which is valid since \( C(x_1) \neq C(x_2) \).
\\
Substituting back into either equation (e.g., the first) gives:
\[
g(Y) = (\dot{x}_1)_q - C(x_1) u(Y) - d(x_1),
\]
which completes the proof of identifiability for both \( u(Y) \) and \( g(Y) \).
\end{proof}

\begin{corollary}
\label{corollary2}
With the data sets:
\[
\dot{\chi} = [\dot{x}_1, \dots, \dot{x}_m], \quad \chi = [x_1, \dots, x_m], 
\]
Where \(\dot{\chi}\) and \(\chi\) follow the governing equation of \ref{theorem2} for some \(g, u:\mathbb{R}^k \rightarrow \mathbb{R}\). For the system described in Theorem \ref{theorem2}, \( g(y_i)\) and \( u(y_i) \) can be uniquely determined for all \( i \in \{1, \dots, m\} \) such that there exists a \(j \in \{1, \dots, m\} \) such that \( y_i = H_1(x_i) = H_1(x_j) = y_j \), and \(C(x_i) \neq C(x_j)\).  
\end{corollary}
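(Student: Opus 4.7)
The plan is to reduce Corollary~\ref{corollary2} directly to Theorem~\ref{theorem2} applied pointwise to the data. Since the data pairs $(x_k, \dot{x}_k)$ are assumed to satisfy the governing equation of Theorem~\ref{theorem2} for the same underlying unknowns $g$ and $u$, each such pair gives a scalar relation in the $q$-th component that has exactly the form exploited in the proof of Theorem~\ref{theorem2}. The goal is therefore to show that, for each qualifying index $i$, the hypothesis supplies a companion pair to which that theorem can be applied.

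First, I would fix an arbitrary index $i \in \{1,\dots,m\}$ satisfying the corollary's hypothesis and let $j \in \{1,\dots,m\}$ be any partner index with $H_1(x_i) = H_1(x_j) = y_i$ and $C(x_i) \neq C(x_j)$. Reading off the $q$-th component of the system at $x_i$ and $x_j$ produces two scalar equations
\begin{align*}
(\dot{x}_i)_q &= g(y_i) + C(x_i)\, u(y_i) + d(x_i), \\
(\dot{x}_j)_q &= g(y_i) + C(x_j)\, u(y_i) + d(x_j),
\end{align*}
which coincide with the two equations that open the proof of Theorem~\ref{theorem2} with $Y = y_i$. From here I would invoke Theorem~\ref{theorem2} verbatim: subtracting eliminates $g(y_i)$ and determines $u(y_i)$ uniquely (using $C(x_i) \neq C(x_j)$), and back-substitution into either equation yields $g(y_i)$.

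Iterating this construction over every admissible index $i$ gives the claimed identifiability for all such $y_i$ simultaneously, completing the argument. The main thing to be careful about --- rather than a genuine obstacle --- is that the partner $j$ may depend on $i$, and that different choices of $j$ for the same $i$ must yield the same reconstructed $u(y_i)$ and $g(y_i)$; this consistency is automatic because Theorem~\ref{theorem2} already asserts that the pair $(u(y_i), g(y_i))$ is \emph{uniquely} determined whenever one valid companion exists. No additional global consistency argument across the dataset is therefore needed, and the corollary follows as a direct discrete analogue of Theorem~\ref{theorem2}.
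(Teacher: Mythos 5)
Your argument is correct and is exactly the intended route: the paper states Corollary~\ref{corollary2} without proof precisely because it follows by applying Theorem~\ref{theorem2} pointwise to each admissible data pair $(x_i, x_j)$, which is what you do. Your remark that different partners $j$ necessarily yield the same $u(y_i)$ and $g(y_i)$ by the uniqueness assertion of Theorem~\ref{theorem2} is a valid (if implicit in the paper) observation and requires no further argument.
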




The two theorems presented above establish conditions under which unique solutions to the inverse problem can be guaranteed for systems involving unknown constants and functions. Theorem \ref{theorem1} applies to cases where the growth term is known up to a multiplicative constant and coupled with Corollary \ref{corollary1}, guarantees uniqueness for all points observed in the training set, while Theorem \ref{theorem2} and Corollary \ref{corollary2} generalize the problem to include completely unknown growth terms with guaranteed uniqueness for specific observations in the dataset.

Although these theorems are promising, one of their assumptions is overly restrictive in practice. Specifically, the condition that two \( y \)-values in a finite dataset are exactly equal is highly improbable due to discretization and finite machine precision. In practical applications, exact equality between two \( y_i \) values is unlikely, which motivates the development of approximate uniqueness results based on continuity assumptions.
To this end, we introduce two more theorems, which we prove in the Appendix.

\begin{theorem}
\label{theorem3}
With the data sets:
\[
\dot{\chi} = [\dot{x}_1, \dots, \dot{x}_m], \quad \chi = [x_1, \dots, x_m], 
\]
For the system described in Theorem \ref{theorem1}, suppose that \( U \) is Lipschitz continuous with Lipschitz constant \( L \).  
Furthermore, suppose that we have \( i,j \in \{1, \dots, m\} \) and \( D \geq 0 \) such that \( \|y_i - y_j\| \leq D \), \( C(x_i) \neq C(x_j) \), and \( g(y_i) \neq 0 \), then \(\beta\) and \(u(y)\) are guaranteed to be within intervals with radii continuously dependent on \(D\).
\end{theorem}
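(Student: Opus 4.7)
The plan is to carry over the estimators from the proof of Theorem~\ref{theorem1} verbatim, but now applied to indices $i,j$ for which $y_i$ and $y_j$ are merely within $D$ of each other rather than equal, and then quantify the error these estimators incur as a function of $D$. Concretely, I would define
\[
\bar{u}_{ij} \;:=\; \frac{(\dot x_i - \dot x_j)_q + d(x_j) - d(x_i)}{C(x_i) - C(x_j)},
\qquad
\bar{\beta}_{ij} \;:=\; \frac{(\dot x_i)_q - C(x_i)\,\bar{u}_{ij} - d(x_i)}{g(y_i)},
\]
both of which are well defined under the assumptions $C(x_i)\neq C(x_j)$ and $g(y_i)\neq 0$. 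The goal is to control $|\bar{u}_{ij} - u(y_i)|$ and $|\bar{\beta}_{ij}-\beta|$ in terms of $D$.

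To bound the error in $\bar{u}_{ij}$, I would plug the governing equations for $(\dot x_i)_q$ and $(\dot x_j)_q$ into the definition of $\bar{u}_{ij}$ and use the algebraic identity $C(x_i)u(y_i)-C(x_j)u(y_j) = (C(x_i)-C(x_j))u(y_i) + C(x_j)(u(y_i)-u(y_j))$. After simplification, the discrepancy becomes
\[
\bar{u}_{ij} - u(y_i) \;=\; \frac{\beta\,(g(y_i)-g(y_j)) + C(x_j)\,(u(y_i)-u(y_j))}{C(x_i)-C(x_j)}.
\]
The second term in the numerator is bounded by $|C(x_j)|\,L\,D$ via the Lipschitz hypothesis on $u$. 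For the first term, since $g$ is a known function, I would assume (or, for smooth $g$, derive on a bounded domain) a modulus of continuity $\omega_g$ so that $|g(y_i)-g(y_j)|\leq \omega_g(D)$; if $g$ is Lipschitz with constant $L_g$ this simplifies to $L_g D$. Dividing by $|C(x_i)-C(x_j)|>0$ yields a radius which is an explicit continuous function of $D$ that tends to $0$ as $D\to 0$.

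For $\bar{\beta}_{ij}$, back-substitution into the first governing equation gives the clean identity
\[
\bar{\beta}_{ij} - \beta \;=\; \frac{C(x_i)\,(u(y_i) - \bar{u}_{ij})}{g(y_i)},
\]
so the error in $\bar\beta_{ij}$ is a scalar multiple of the error in $\bar u_{ij}$ and hence inherits the same $D$-dependent continuous bound. Combining the two gives the claimed intervals around $\beta$ and $u(y_i)$ (and symmetrically $u(y_j)$) whose radii depend continuously on $D$ and collapse to the exact recovery of Theorem~\ref{theorem1} in the limit $D\to 0$.

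The main technical obstacle is the circularity that both $\beta$ and $u$ are unknown simultaneously, so a naive perturbation of the Theorem~\ref{theorem1} algebra would introduce coupled error terms; the trick above is that the $C$-difference-quotient estimator $\bar{u}_{ij}$ does not depend on $\beta$ at all, so the two errors decouple cleanly and can be bounded sequentially. A secondary subtlety is that the statement only assumes Lipschitz continuity of $u$, not of $g$, so I would need to be explicit that the continuity of the final bound in $D$ inherits whatever modulus of continuity $g$ possesses on the relevant domain (which is available for free since $g$ is a known quantity).
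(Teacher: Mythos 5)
Your overall strategy is the same as the paper's: subtract the $q$-th equations at $x_i$ and $x_j$, use the Lipschitz hypothesis on $u$ to control $u(y_i)-u(y_j)$ by $LD$, and back-substitute to get $\beta$. The error identity you derive, $\bar{u}_{ij}-u(y_i)=\bigl(\beta\,(g(y_i)-g(y_j))+C(x_j)(u(y_i)-u(y_j))\bigr)/(C(x_i)-C(x_j))$, and the relation $\bar{\beta}_{ij}-\beta = C(x_i)(u(y_i)-\bar{u}_{ij})/g(y_i)$ are both algebraically correct. The place where you diverge from the paper, and where your argument is weaker, is the treatment of the term $\beta\,(g(y_i)-g(y_j))$. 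Your claim that the errors ``decouple cleanly'' because $\bar{u}_{ij}$ does not involve $\beta$ is not right: the \emph{estimator} is $\beta$-free, but its \emph{error} is not, so after bounding $|g(y_i)-g(y_j)|$ by a modulus of continuity you are left with a radius proportional to the unknown $|\beta|$. The circularity you flagged as the main obstacle has therefore not been removed, only pushed into the bound: the resulting intervals cannot be constructed from the data (and the modulus-of-continuity assumption on $g$ is an extra hypothesis not present in the theorem, since only Lipschitz continuity of $u$ is assumed).

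The paper avoids both issues by exploiting that $g$ is a known function evaluated at known points, so $g(y_i)-g(y_j)$ is an exactly known number and need not be bounded at all. It keeps the term $\beta\,(g(y_i)-g(y_j))$ inside the sandwich bound for $u(y_i)$, substitutes that interval into $\beta=\bigl((\dot{x}_i)_q-C(x_i)u(y_i)-d(x_i)\bigr)/g(y_i)$, and then \emph{solves the resulting linear relation for} $\beta$ (which appears on both sides). This yields a center given by an explicit data-dependent formula and a correction whose magnitude is bounded by $C(x_i)C(x_j)LD$ divided by a known denominator, i.e.\ a radius computable from the data alone and proportional to $D$; the bounds on $u(y_p)$ for every index $p$ then follow by back-substitution of this $\beta$-interval, which is how the theorem's conclusion for all observed $y$-values (and Remark~\ref{remark1}) is obtained. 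Your argument can be repaired the same way: instead of bounding $\beta\,(g(y_i)-g(y_j))$, treat it exactly and solve the linear equation for $\beta$; as written, however, the proposal falls short of the computable, $\beta$-free error bounds that the theorem and its accompanying remark are meant to deliver, and it only addresses $u$ at $y_i,y_j$ rather than at all data points.
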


\begin{remark}
\label{remark1}
We find that if the predicted values for \(\beta\) and \(u(y)\) in Theorem \ref{theorem3} (defined as \(\bar{\beta}\) and \(\bar{u}(y)\)) are taken to be in the centers of their respective possible intervals of existence, then the errors in these predictions are taken to be bounded by the following:

\[
|\beta - \bar{\beta}| \leq \bigg|\frac{C(x_i)C(x_j)LD}{g(y_i)(C(x_i)-C(x_j)) - (C(x_i)-C(x_j))}\bigg|
\]
\[
|u(y_p) - \bar{u}(y_p)| \leq \bigg|\frac{g(y_p)}{C(z_p)}\bigg||\beta - \bar{\beta}|
\]
Where \(i\) and \(j\) are the indices in the data set that minimize the above inequality and \(p\) is an index in the set \(\{1,\dots,m\}\). The derivation for these inequalities is included in the proof of Theorem \ref{theorem3} in the Appendix.

\end{remark}

\begin{theorem}
\label{theorem4}
With the data sets:
\[
\dot{\chi} = [\dot{x}_1, \dots, \dot{x}_m], \quad \chi = [x_1, \dots, x_m], 
\]
For the system described in Theorem \ref{theorem2}, suppose that \(g(y)\) and \( u(y) \) are Lipschitz continuous with Lipschitz constants \( L_1\) and \(L_2\), respectively.  
Furthermore, suppose that we have \( i,j \in \{1, \dots, m\} \) and \( D \geq 0 \) such that \( \|y_i - y_j\| \leq D \) and \( C(x_i) \neq C(x_j) \), then \(g(y_i)\) and \(u(y_i)\) are guaranteed to be within intervals with radii continuously dependent on \(D\).
\end{theorem}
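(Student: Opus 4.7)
The plan is to mirror the argument of Theorem 2, but to replace the exact cancellations enabled by $y_i = y_j$ with controlled error terms coming from the Lipschitz hypotheses on $g$ and $u$. Concretely, I would write down the $q^\text{th}$ component of the system at the two points $x_i$ and $x_j$, subtract them as before, and then decompose
\[
C(x_i)u(y_i) - C(x_j)u(y_j) = (C(x_i) - C(x_j))\,u(y_i) + C(x_j)\bigl(u(y_i) - u(y_j)\bigr),
\]
so that the subtracted equation reads
\[
(\dot x_i - \dot x_j)_q = (C(x_i) - C(x_j))u(y_i) + C(x_j)(u(y_i) - u(y_j)) + (g(y_i) - g(y_j)) + d(x_i) - d(x_j).
\]

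Next, I would solve algebraically for $u(y_i)$ by dividing through by $C(x_i) - C(x_j) \neq 0$, isolating a ``target'' estimator
\[
\bar u(y_i) := \frac{(\dot x_i - \dot x_j)_q + d(x_j) - d(x_i)}{C(x_i) - C(x_j)},
\]
which is exactly the Theorem~\ref{theorem2} formula, plus two residual terms. Applying the Lipschitz bounds $|u(y_i)-u(y_j)| \le L_2 D$ and $|g(y_i)-g(y_j)| \le L_1 D$, I would obtain
\[
|u(y_i) - \bar u(y_i)| \le \frac{|C(x_j)|\,L_2 D + L_1 D}{|C(x_i) - C(x_j)|},
\]
which is linear (hence continuous) in $D$ and collapses to $0$ at $D = 0$, recovering Theorem~\ref{theorem2}.

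Then I would substitute this bound back into the $q^\text{th}$ component evaluated at $x_i$ to isolate $g(y_i)$. Defining $\bar g(y_i) := (\dot x_i)_q - C(x_i)\bar u(y_i) - d(x_i)$, the propagation of error gives
\[
|g(y_i) - \bar g(y_i)| \le |C(x_i)| \cdot |u(y_i) - \bar u(y_i)|,
\]
which is again continuous in $D$. Taking these half-widths as the radii of the guaranteed intervals completes the argument, and in analogy with Remark~\ref{remark1} one would choose the pair $(i,j)$ minimizing the above bound.

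The main obstacle, as in Theorem~\ref{theorem3}, is bookkeeping: in particular, organising the $C(x_i)u(y_i) - C(x_j)u(y_j)$ cross-term so that the Lipschitz slack is cleanly separated from the quantity one actually wants to solve for, and then ensuring that the propagated bound on $g(y_i)$ does not inadvertently reintroduce unknowns. The symmetric decomposition above is what makes the error estimate depend only on the known quantities $C(x_i), C(x_j), L_1, L_2, D$, and this is the crux of the proof; the remaining steps are essentially the same algebra as in Theorem~\ref{theorem2}.
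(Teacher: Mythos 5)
Your proposal is correct and follows essentially the same route as the paper's proof: subtract the two $q^{\text{th}}$-component equations, use the Lipschitz constants $L_1, L_2$ to bound the mismatch terms $g(y_i)-g(y_j)$ and $u(y_i)-u(y_j)$, solve for $u(y_i)$ up to a correction of size $D\left(L_1 + |C(x_j)|L_2\right)/|C(x_i)-C(x_j)|$, and propagate to $g(y_i)$ with the factor $|C(x_i)|$, exactly matching the paper's interval radii. Your explicit decomposition of the cross-term $C(x_i)u(y_i)-C(x_j)u(y_j)$ is simply a tidier (and sign-safe, since you keep absolute values and the $d(x)$ terms) way of organizing the same estimate the paper obtains via its interval/WLOG argument.
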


\begin{remark}

With a similar setup to Remark \ref{remark1} for the predicted values of \(g(y)\) and \( u(y) \), we find that the errors are bounded by the following:
\[
|u^*(y_i) - \bar{u}(y_i)| \leq \bigg|\frac{D(L_1 + C(x_j)L_2)}{C(x_i) - C(x_j)}\bigg|
\]
\[
|g^*(y_i) - \bar{g}(y_i)| \leq \bigg|\frac{C(x_i)D(L_1 + C(x_j)L_2)}{C(x_i) - C(x_j)}\bigg|
\]
the derivation is shown in the Appendix.
\end{remark}

The additional Theorems demonstrate that the results of Theorems \ref{theorem1} and \ref{theorem2} can be practically applied even when data is not perfectly precise, with bounded error.
These results provide a foundation for designing systems and experiments where unique parameter estimation is essential.


%

\section{Application of Uniqueness Conditions} \label{Application}

To validate the theoretical results presented in Section \ref{Theory}, we applied the uniqueness conditions to three test cases: two variations of a chemotherapy intervention  model and the Lotka-Volterra predator-prey system. This section describes the problem setup, experimental configurations, and results for each case.

In Sections 4.1-4.3, we demonstrate the uniqueness of solutions in a simplified version of the problem. In this version, we provided the true values of the derivatives of a set of training points. This is done to emulate the assumptions made by the theorems exactly. Thus, in these experiments, the only terms that were learned were the unknown functions and constants present in the differential equations.

In Sections 4.4 and 4.5, the experiments are performed on UPINNs. The only data available for the model are samples of \((N(t),C(t))\) obtained from a single trajectory. For these trials, the derivative values were obtained by auto-differentiation of a UPINN fit to the training points. These experiments are intended to demonstrate the practical effectiveness of our approach.

Throughout Sections 4.1-4.3 below, we use a feedforward neural network with a single input, four hidden layers, each with 20 neurons, and a single output. Optimization was performed using the Adam optimizer with a learning rate of \(10^{-3}\). Training was conducted for 1000 epochs in the first and third experiments, and 5000 epochs in the second experiment.

The network outputs were evaluated using the Mean Squared Error (MSE) loss.

All networks discussed below utilize tanh activation functions.

\subsection{Case 1: Chemotherapy Intervention Problem}

The first test case is based on a chemotherapy intervention model described by \cite{podina2024chemotherapy}, with the following equations:
\begin{eqnarray}
&&    \frac{dN}{dt} = \beta N(1 - N) - C(t) u(N) \nonumber \\
&&    \frac{dC}{dt} = -\gamma N C + I(t)
\end{eqnarray}
Here \(N(t)\) is the population of cancerous cells \(C(t)\) is the concentration of the chemotherapeutic drug, \(\beta\) is the unknown growth rate, \(u(N)\) is the unknown drug action, and \(I(t)\) is the external drug injection, modeled as \(I(t) = e^{-\tau(t-4)^2}\) with \(\tau = 5\). In this scenario, \(I(t)\) introduces a burst of chemotherapeutic drug at \(t = 4\), and drug concentration remains nonzero throughout the trial.

The approach used in \cite{podina2024chemotherapy} would struggle to recover the original values under such conditions. By conforming to the uniqueness conditions in Theorem \ref{theorem3}, we train the network to approximate \(u(N)\) instead of the product \(C(t)u(N)\). This ensures that the model learns the unique solution.

The following initial conditions and parameters were used: \(C(0) = 1\), \(N(0) = 1\), and \(\gamma = 0.3\). Two different true forms for \(u(N)\) are considered \(u(N) = N\) and \(u(N) = N^2\), and \(\beta^* = 1\) as the true value, with an initial guess of \(\beta = 2\).

The model successfully recovered \(\beta\) and \(u(N)\) for both functional forms of \(u(N)\). By meeting the conditions in Theorem \ref{theorem1}, the model was able to converge to the correct unique solution.

\subsection{Case 2: Modified Chemotherapy Intervention Problem}
In this modified version, the growth term is treated as completely unknown, leading to the following system:
\begin{equation}
    \frac{dN}{dt} = \Psi(N) - C(t) u(N)
\end{equation}
where \(\Psi(N)\) is the unknown growth term and \(u(N)\) is the unknown drug action.

In this setup, the model must learn both \(\Psi(N)\) and \(u(N)\) simultaneously. Observations indicate that low drug concentrations lead to an increase in the cancer cell population (\(N\)), while high drug concentrations result in a decrease. These dynamics ensure that the uniqueness conditions in Theorem \ref{theorem4} are satisfied. The same network configuration and training procedures were used as in the previous case. To accurately approximate both functions, two separate models were trained for \(\Psi(N)\) and \(u(N)\), respectively.

The model successfully converged to the unique solution for both \(\Psi(N)\) and \(u(N)\), demonstrating that the conditions in Theorem \ref{theorem2} provide sufficient guarantees for uniqueness.

The following initial conditions and parameters were used: \(C(0) = 0.1\), \(N(0) = 0.01\), and \(\gamma = 0.5\). The true values for \(u(N)\) and \(\beta\) were set to \(u(N) = N\) and \(\beta^* = 1\).

\subsection{Case 3: Lotka-Volterra Predator-Prey System}

The Lotka-Volterra system models predator-prey interactions using the following equations:
\begin{eqnarray}
    &&  \frac{dx}{dt} = \alpha x - \beta x y \nonumber \\
    &&  \frac{dy}{dt} = \delta x y - \gamma y
\end{eqnarray}
Here \(x(t)\) is the prey population, \(y(t)\) is the predator population,
\(\alpha, \gamma\) are unknown parameters, and \(\beta x\) and \(\delta y\) are treated as unknown functional terms.

This system exhibits oscillatory behavior, which ensures sufficient variation in the dataset to satisfy the uniqueness conditions in Theorem \ref{theorem3}. In fact, to guarantee the uniqueness of both components of the system, we apply Theorem \ref{theorem3} twice. It is applied independently for each component. The following initial conditions and parameters were used to generate synthetic data: \(x(0) = 2\), \(y(0) = 4\). The true values of the parameters were set to \(\alpha = \beta = \delta = \gamma = 1\). The initial guesses for the parameters were: \(\alpha = 2,\gamma = 2\).

\subsection{Case 4: Lotka-Volterra Predator-Prey System with Proportional Noise}
\label{proportionalNoise}
To test the robustness of UPINNs that utilize this functional form, we tested the performance of a UPINN trained on the same data as in Section 4.3 with injected noise. Each data point was perturbed by up to a specified percentage of its magnitude. perturbations were drawn from a uniform distribution. The UPINN is composed of three neural networks. Two networks to represent the unknown components of the system of differential equations and a single network approximating the solution trajectory. All networks had a single input and four hidden layers of width 20. The networks representing the unknown components of the system had a single output, and the network approximating the trajectory had two outputs. One for each component of the state.

The system was assigned parameters and initial conditions identical to Section 4.3. The initial guesses for the parameters were: \(\alpha = 1.5,\gamma = 0.5\). Throughout training, the loss provided to the optimizer took the form \(\mathcal{L}_{tot} = \mathcal{L}_{Data} + \omega_{DE}\mathcal{L}_{DE}\) where \(\omega_{DE}\) was assigned a value of 0.1. This was done to insure that the general shape of the trajectory would be learned first, with \(\mathcal{L}_{DE}\) helping to make adjustments to satisfy the ODE.

\subsection{Case 5: Chemotherapy Intervention Problem with Varying Dataset Length}
\label{Varying dataset Lengths}
To test the performance of the model in different data regimes, we trained a UPINN on the problem described in Section 4.1 with differing training set lengths. The UPINN was composed of two neural networks. One network composed of a single input taking in \(t\), three hidden layers of width 20, and two outputs approximating \(N(t)\) and \(C(t)\). The \(C(t)\) output was introduced to allow for the ODE loss to be evaluated at times when a data point is not present. The \(C(t)\) term had an MSE data loss and ODE residual loss with no unknown components, making it a PINN. The second network had a single input taking in \(N\), two hidden layers of width 10, and a single output approximating \(u(N)\). The network sizes were reduced from previous trails as it was found that this did not greatly reduce performance and meaningfully reduced training time. 

The model was trained on datasets of length 1024, 512, 256, 128, 64, 32, 16, 8, and 4.

While the number of points where the data loss can be evaluated decreases with the number of data points present in the training set, the ODE loss can still be evaluated for arbitrary values of time with collocation points. Thus, for all trials, the ODE loss used 1024 collocation points when optimizing the ODE loss.

The term \(I(t)\) was modified to increase the introduced amount of chemotherapeutic. It was changed from \(I(t) = e^{-\tau(t-4)^2}\) to \(I(t) = 5e^{-\tau(t-4)^2}\) with \(\tau = 5\) in both cases. The following initial conditions and parameters were used: \(C(0) = 0.1\), \(N(0) = 0.01\), and \(\gamma = 0.3\). The true form for \(u(N)\) was taken as \(u(N) = 2N\), and \(\beta^* = 2\) as the true value, with an initial guess of \(\beta = 1.5\). As in Section \ref{proportionalNoise}, the loss provided to the optimizer took the form \(\mathcal{L}_{tot} = \mathcal{L}_{Data} + \omega_{DE}\mathcal{L}_{DE}\) where \(\omega_{DE}\) was assigned a value of 0.001.

\section{Results} \label{Results}

This section presents the outcomes of applying the proposed uniqueness conditions to five test cases: the chemotherapy intervention problem, its modified version, the Lotka-Volterra predator-prey system, and full UPINN implementations testing robustness to noise and varying dataset lengths. Each subsection provides the problem setup, network performance, and observations.

\subsection{Chemotherapy Intervention Problem}

The solutions for both trials of the chemotherapy intervention problem are shown in Figure \ref{fig:chemotherapy problem results}. 

In the first trial, where the true drug action was \(u(N) = N\), the network achieved a final loss of \(1.65649 \times 10^{-6}\). The predicted value of the growth constant was \(\beta = 1.0003069\). The comparison between the true and predicted drug action is displayed in the top-right panel of Figure \ref{fig:chemotherapy problem results}. The network performed well in approximating the drug action, but accuracy declined for values of \(N < 0.3\). This can be attributed to the absence of training data in this region, as the network only trains on observed data.

In the second trial, where the true drug action was \(u(N) = N^2\), the same network architecture and initial conditions were used. The final loss of the network was \(7.94170 \times 10^{-7}\), with a predicted value of \(\beta = 1.0002422\). The comparison between the true and predicted drug action is shown in the bottom-right panel of Figure \ref{fig:chemotherapy problem results}. For this trial, the divergence between the true and predicted values occurred for \(N < 0.6\), which corresponds to the lower limit of the observed data. This limitation highlights the sensitivity of the network to the availability of training data.

\begin{figure}
\centering
\includegraphics[width=0.49\linewidth]{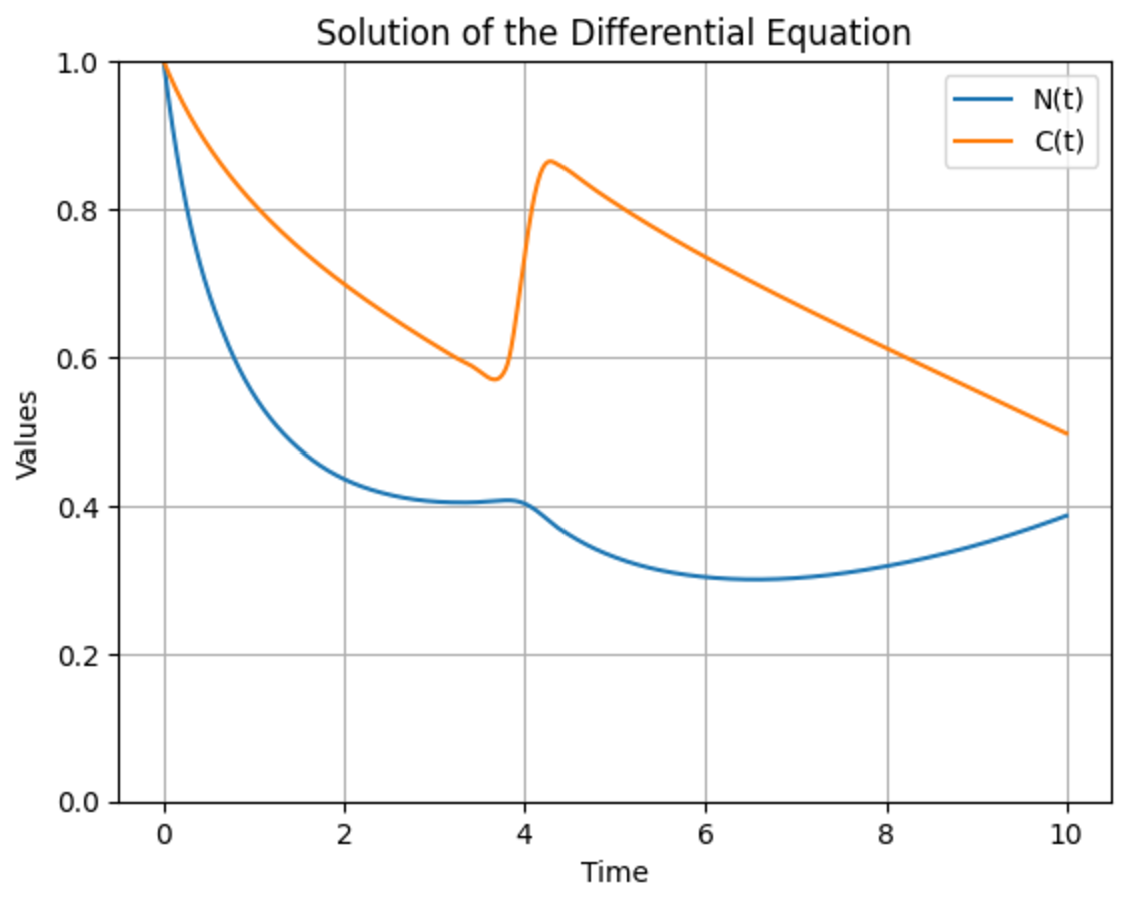}
\includegraphics[width=0.49\linewidth]{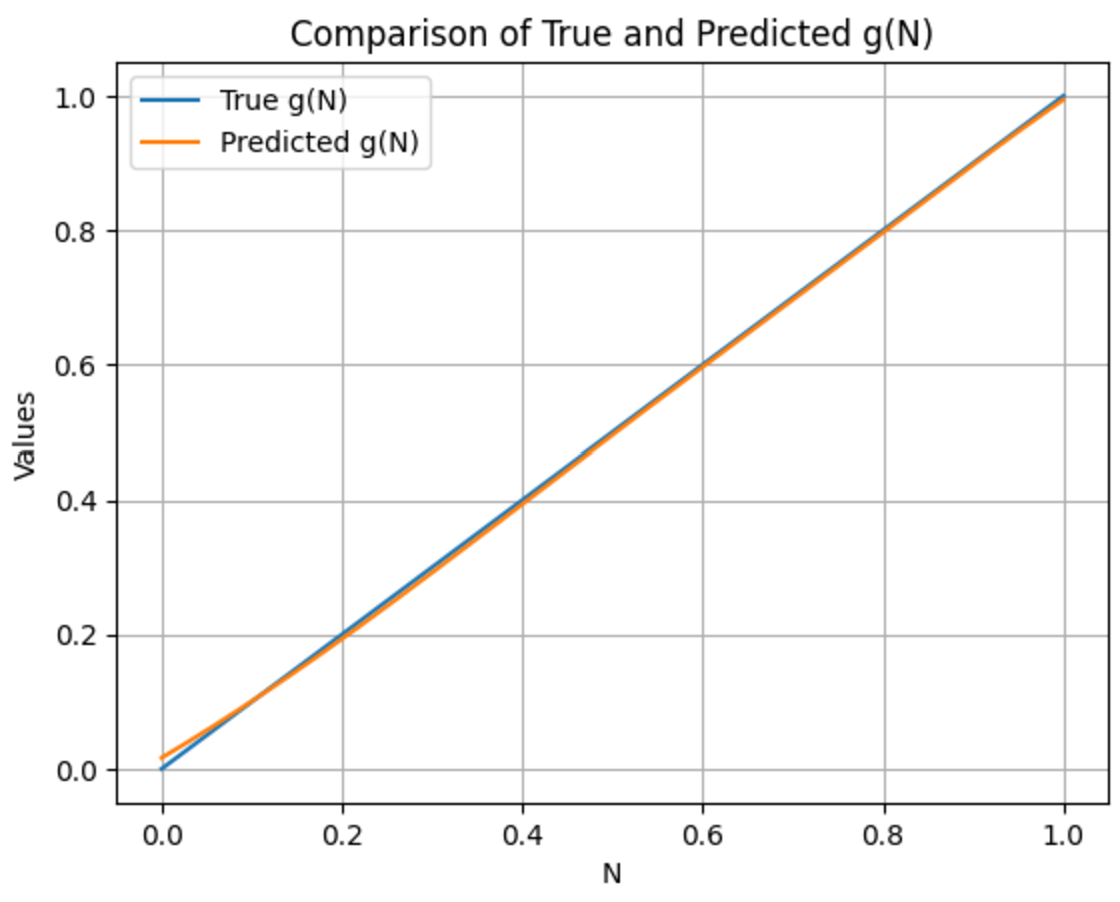}
\includegraphics[width=0.49\linewidth]{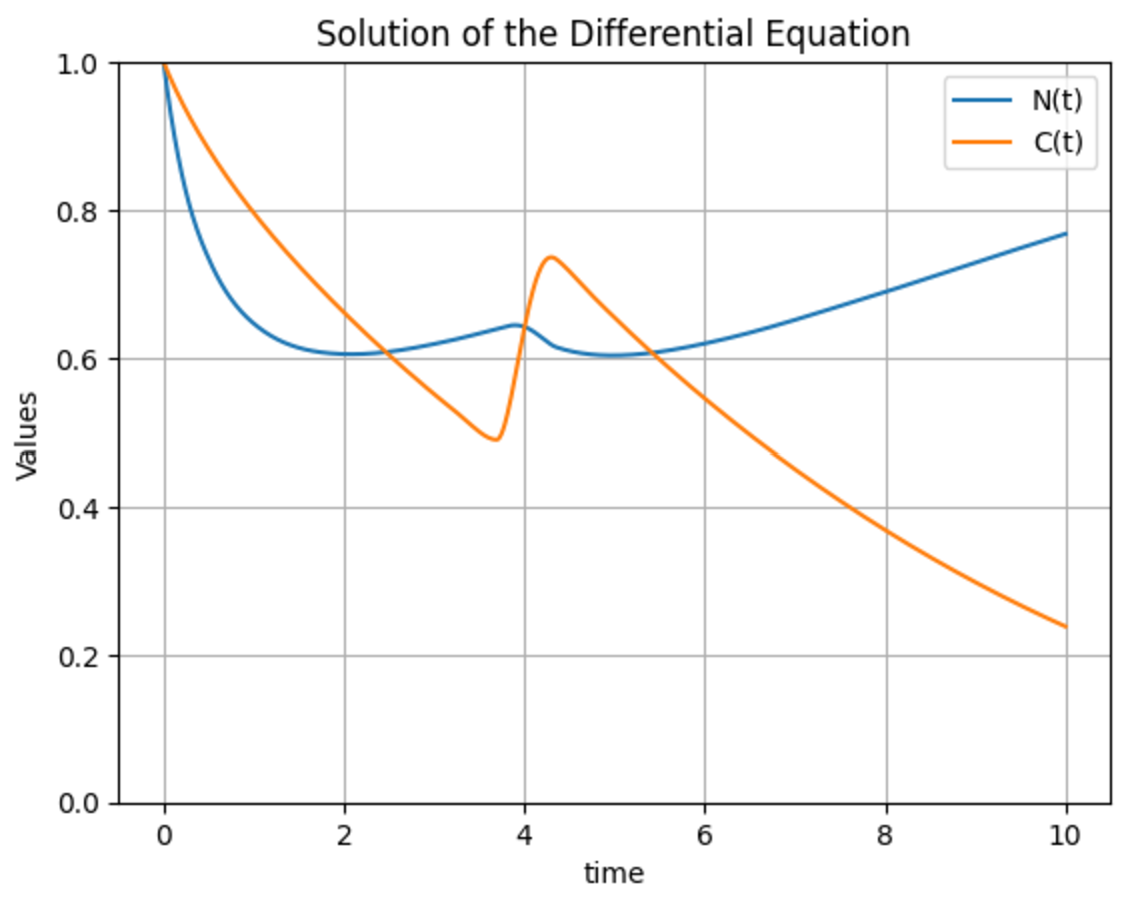}
\includegraphics[width=0.49\linewidth]{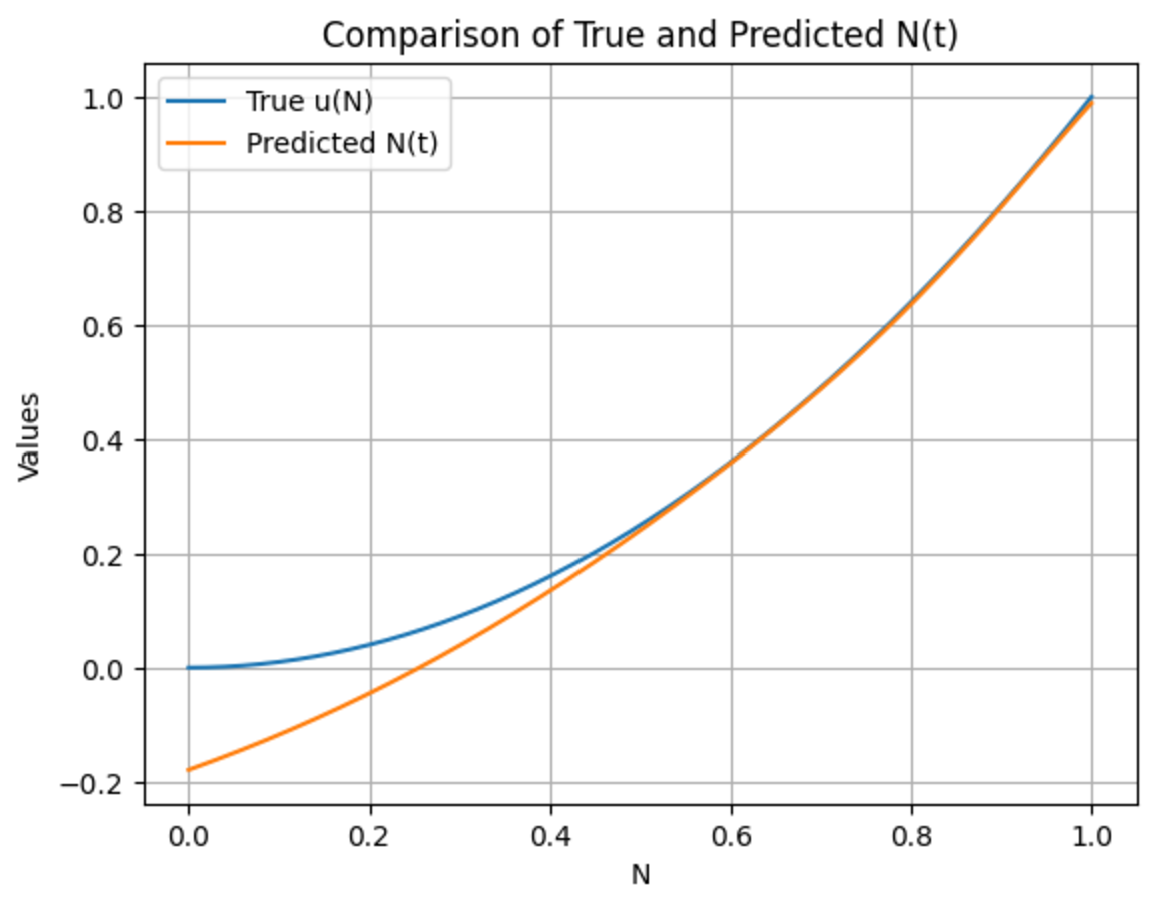}
\caption{\label{fig:chemotherapy problem results} Top row: Solutions to the differential equations and the comparison between true and predicted drug action for Trial 1 (\(u(N) = N\)). Bottom row: Results for Trial 2 (\(u(N) = N^2\)).}
\end{figure}

\subsection{Modified Chemotherapy Intervention Problem}

The modified chemotherapy intervention problem was used to test the ability of the model to simultaneously learn both the unknown growth term \(\Psi(N)\) and the drug action \(u(N)\). The final loss achieved by the network was \(3.211754 \times 10^{-7}\), and the results are shown in Figure \ref{fig:modified problem solution}.

The comparison between the true and predicted values indicates that the network generally performed well. However, a divergence was observed for \(N \approx 0.5\), where the uniqueness conditions are no longer fully satisfied. Interestingly, the predicted growth term \(\Psi(N)\) closely matched the true value, even in regions where the conditions were not met. This unexpected behavior warrants further investigation to understand the factors influencing the network’s performance.

\begin{figure}[h]
\centering
\includegraphics[width=0.49\linewidth]{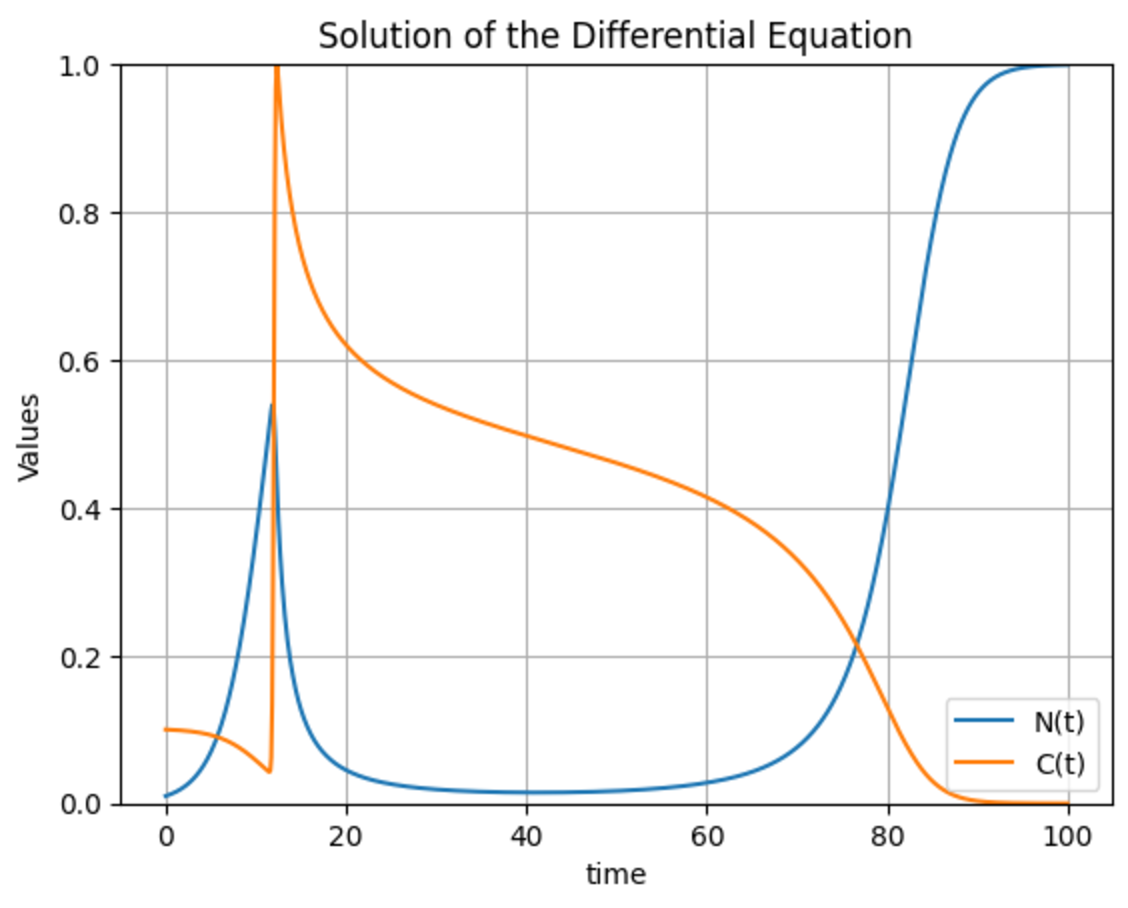}
\includegraphics[width=0.49\linewidth]{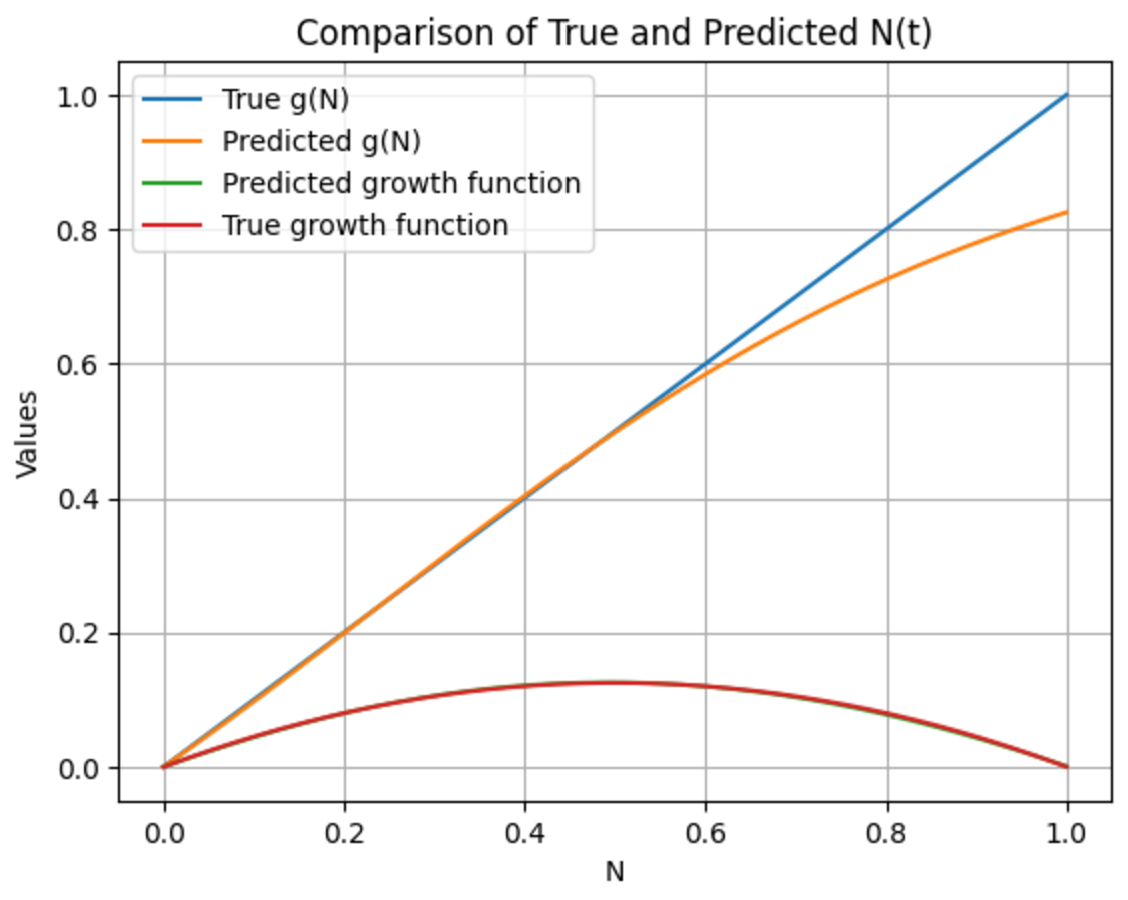}
\caption{\label{fig:modified problem solution} Results for the modified chemotherapy intervention problem. Left: Solution to the differential equations. Right: Comparison of the true and predicted growth term \(\Psi(N)\) and drug action \(u(N)\).}
\end{figure}

\subsection{Lotka-Volterra Predator-Prey System}

The Lotka-Volterra system provided an additional test case for the uniqueness conditions in a more complex setting. The solutions of the system with the given initial conditions are shown in Figure \ref{fig:Lotka-Volterra solution}.

The final estimated values from the network were \(\alpha = 0.99979\) and \(\gamma = 0.99970\). Since the \(\beta x\) and \(\delta y\) terms were approximated using neural networks, explicit values for \(\beta\) and \(\delta\) were not directly obtained. The comparisons between the true and predicted function values are shown in Figure \ref{fig:Lotka-Volterra solution} (right). Similar to the previous trials, the approximations begin to diverge from the true values as they approach the upper limit of the training data. The final loss achieved was \(4.341673 \times 10^{-5}\).

The oscillatory nature of the Lotka-Volterra system, combined with sufficient observational data, ensured that the uniqueness conditions in Theorem \ref{theorem1} were met, allowing the network to converge to the correct solution.

\begin{figure}[h]
\centering
\includegraphics[width=0.49\linewidth]{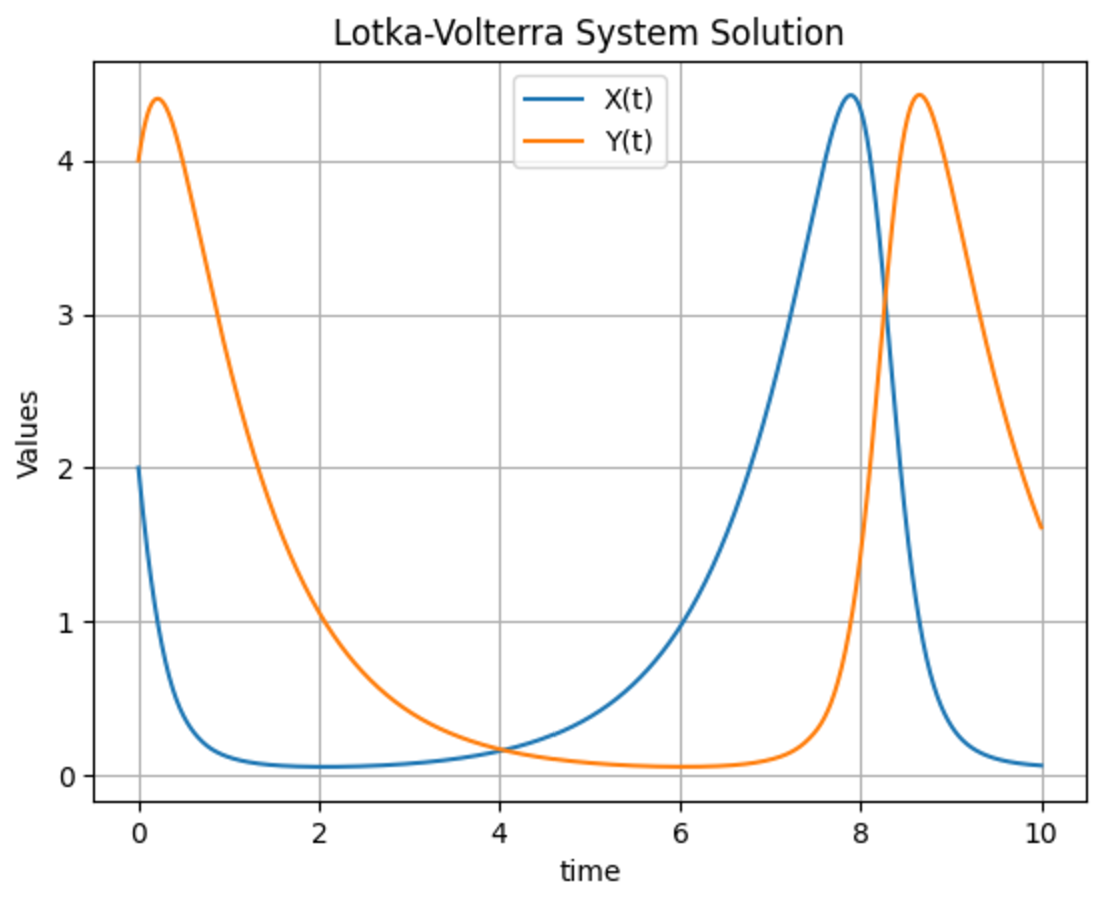}
\includegraphics[width=0.49\linewidth]{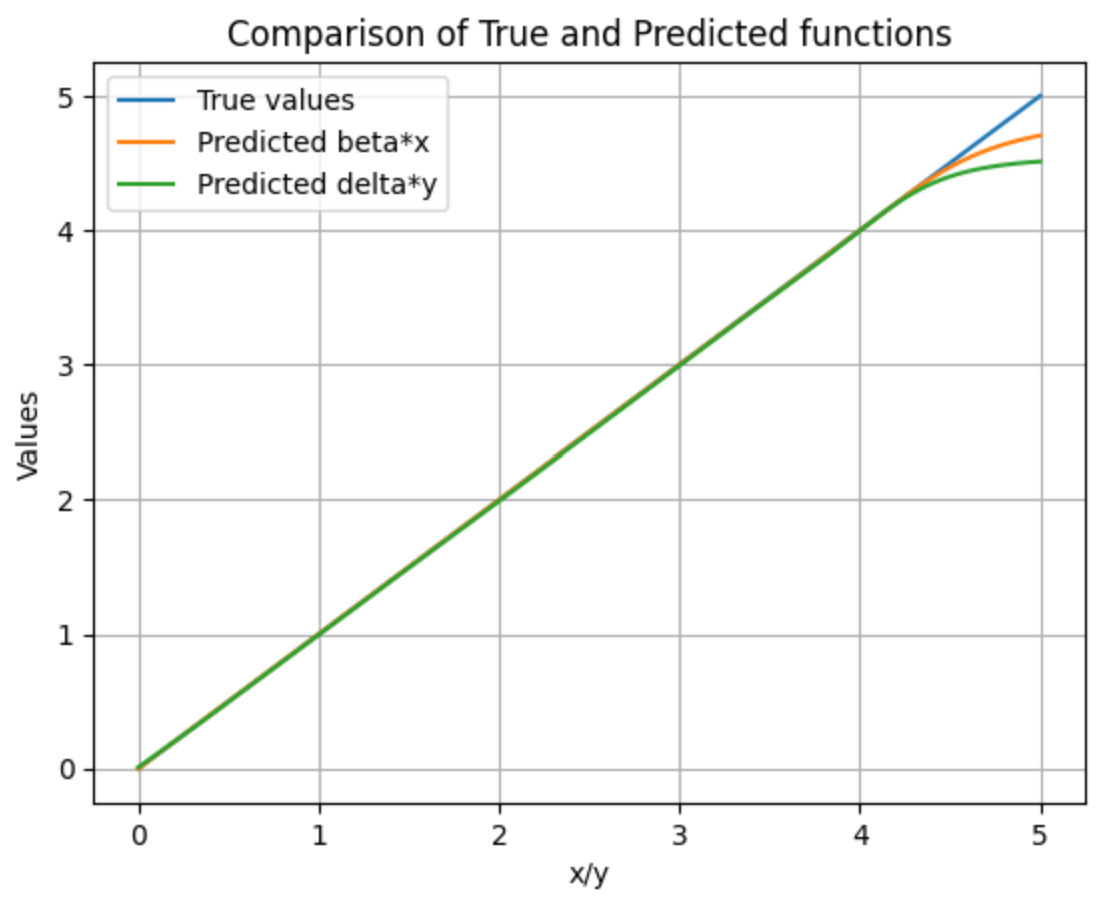}
\caption{\label{fig:Lotka-Volterra solution} Solution to the Lotka-Volterra system with the given initial conditions (left). Comparison between the true and predicted values of $\beta x$ and $\delta y$ (right).}
\end{figure}

The results demonstrate the effectiveness of the proposed methods in learning both unknown constants and functional forms. In the chemotherapy intervention problem, the network accurately recovered the true values of \(\beta\) and \(u(N)\), although the performance was sensitive to the availability of training data. The modified chemotherapy intervention problem confirmed that the network could handle more complex scenarios involving unknown growth terms, while maintaining accuracy where the uniqueness conditions were satisfied. Finally, the Lotka-Volterra system highlighted the applicability of the method to more complex ecological models, with accurate recovery of oscillatory dynamics and unknown parameters.

\subsection{Lotka-Volterra Predator-Prey System with Proportional Noise}

With substantial support for the uniqueness conditions in place, we now examine practical implementations of these results with UPINNs. The results obtained from the trial described in Section \ref{proportionalNoise} are shown in Tables \ref{tab:Proportional noise recovered constants comparison} and \ref{tab:Proportional noise losses}, which show the predicted values of the constants and final loss values from each trial respectively. The evaluation metrics listed in Table \ref{tab:Proportional noise losses} compare the final output of the trained model with the original uncorrupted data.

\begin{table}[h]
\centering
\begin{tabular}{|c|c|c|c|c|}
\hline
\%Noise & Pred. \(\alpha\) & Pred. \(\gamma\) & \(\alpha\) \%Error & \(\gamma\) \%Error \\
\hline
\hline
0\%   & 0.9978  & 0.9980  & 0.223\%   & 0.202\%   \\
\hline
5\%   & 0.9861   & 0.9974  & 1.385\%   & 0.257\%   \\
\hline
7.5\%   & 0.9811   & 0.9969  & 1.891\%   & 0.312\%   \\
\hline
10\%   & 0.9769   & 0.9964  & 2.314\%   & 0.364\%   \\
\hline
12.5\%   & 0.9732   & 0.9961  & 2.683\%   & 0.393\%   \\
\hline
15\%   & 0.9702   & 0.9954  & 2.983\%   & 0.458\%   \\
\hline
30\%   & 0.9586   & 0.9880  & 4.136\%   & 1.205\%   \\
\hline
\end{tabular}
\caption{The left most column displays the amount of injected noise for each trail and the rest show the corresponding final predictions of the model along with their percent errors.}
\label{tab:Proportional noise recovered constants comparison}
\end{table}

\begin{table}[h]
\centering
\begin{tabular}{|c|c|c|c|c|}
\hline
\% Noise & Data Loss & ODE Loss & \(R^2\) & MAPE  \\
\hline
\hline
0\%   & 1.075e-5   & 2.389e-4 & 0.999994 & 0.7192\%  \\
\hline
5\%   & 1.699e-4   & 4.794e-4 & 0.999895 & 2.2102\% \\
\hline
7.5\%   & 2.418e-4   & 6.802e-4 & 0.999851 & 2.0407\% \\
\hline
10\%   & 4.047e-4   & 8.936e-4  & 0.999750 & 1.9870\% \\
\hline
12.5\%   & 5.818e-4   & 1.232e-3  & 0.999641 & 2.1338\% \\
\hline
15\%   & 7.807e-4   & 1.887e-3 & 0.999519 & 2.4241\% \\
\hline
30\%   & 3.158e-3   & 7.212e-3 & 0.998010 & 5.9208\% \\
\hline
\end{tabular}
\caption{The left most column displays the amount of injected noise for each trail and the rest show the corresponding final losses, coefficients of determination (\(R^2\)), and Mean Absolute Percent Errors (MAPE) achieved by the model.}
\label{tab:Proportional noise losses}
\end{table}

These results demonstrate the robustness of UPINNs utilizing this functional form. In all trails, the percent error for the predicted parameters did not exceed 4.5\%. The low final data losses achieved by the models indicate that the physics-informed nature of the networks assisted in the prevention of overfitting to the random noise.

\subsection{Chemotherapy Intervention Problem with Varying Dataset Length}

We now examine the effects of varying dataset lengths on UPINNs utilizing this functional form. The results of the trial described in Section \ref{Varying dataset Lengths} are summarized in Tables \ref{tab:varying dataset length growth constant} and \ref{tab:varying dataset length evaluation metrics}.

\begin{table}[h]
\centering
\begin{tabular}{|c||c|c|}
\hline
Length&Pred. \(\beta\)&\(\beta\) \%Error\\
\hline
\hline
1024   & 1.9995   & 0.02256\%  \\
\hline
512   & 1.9995   & 0.02579\% \\
\hline
256  & 1.9994   & 0.03193\% \\
\hline
128   & 1.9996   & 0.02002\%  \\
\hline
64   & 1.9993   & 0.03340\% \\
\hline
32   & 1.9988   & 0.05940\%  \\
\hline
16   & 1.9934   & 0.33171\% \\
\hline
8   & 1.9729   & 1.35683\%  \\
\hline
4   & 0.8156   & 59.21943\% \\
\hline
\end{tabular}
\caption{The left most column displays the length of the training set for each trail and the rest show the corresponding predictions of \(\beta\) along with their percent error.}
\label{tab:varying dataset length growth constant}
\end{table}

\begin{table}[h]
\centering
\begin{tabular}{|c|c|c|c|c|}
\hline
Length&Data Loss&ODE Loss& \(R^2\) & MAPE\\
\hline
\hline
1024   & 3.302e-7   & 1.942e-5  & 0.999992   & 0.3934\%  \\
\hline
512   & 3.289e-7   & 1.895e-5 & 0.999992  & 0.3918\% \\
\hline
256  & 3.276e-7   & 1.823e-5 & 0.999992   & 0.3899\% \\
\hline
128   & 2.818e-7   & 1.670e-5 & 0.999993   & 0.3828\%  \\
\hline
64   & 2.957e-7   & 1.435e-5  & 0.999993   & 0.3750\% \\
\hline
32   & 2.513e-7   & 1.189e-5 & 0.999994   & 0.3595\%  \\
\hline
16   & 3.116e-6   & 6.719e-5 & 0.999925   & 0.3878\% \\
\hline
8   & 5.990e-5  & 2.996e-4 & 0.998567   & 2.2159\%  \\
\hline
4   & 9.752e-3   & 2.621e-3 & 0.766668   & 60.159\% \\
\hline
\end{tabular}
\caption{The left most column displays the length of the training set for each trail and the rest show the corresponding losses, coefficients of determination (\(R^2\)), and Mean Absolute Percent Errors (MAPE). }
\label{tab:varying dataset length evaluation metrics}
\end{table}

Throughout the trials with data set lengths from 1024 to 64, results remained relatively consistent with errors that hover around 0.025\% for the predicted \(\beta\). For data set lengths of 32 to 8, we observe a rapid increase in these errors, although the final result remains close to the true value. For the data set of length 4 we reach failure with a final error of 59.2\%. This behavior is mirrored in the losses, coefficients of determination, and mean absolute percent errors for these trails.

\section{Discussion} \label{Discussion}

This work demonstrated the practical utility of the proposed uniqueness conditions through five test cases: the chemotherapy intervention problem, its modified version, the Lotka-Volterra predator-prey system, the Lotka-Volterra system with proportional noise, and the chemotherapy intervention problem with varying dataset lengths. Each test case highlighted different aspects of the theoretical results and their applicability to inverse problems involving differential equations.

In the first three test cases, where the derivatives of the systems of differential equations were assumed to be known, the conclusions of the theorems discussed in Section \ref{Theory} were tested under their exact assumptions.

In the chemotherapy intervention problem, two scenarios were tested, in which the true drug actions were \(u(N) = N\) and \(u(N) = N^2\), with a true growth constant of \(\beta = 1\). In both cases, the model successfully recovered the true values of \(\beta\) and \(u(N)\) whenever the uniqueness conditions specified in Theorem \ref{theorem3} were met. The results confirmed that the approach effectively handles separable forms of the unknown functional terms, provided that the data adequately spans the functional and parametric space.

The modified chemotherapy intervention problem introduced a more complex scenario by treating the growth term \(\Psi(N)\) as entirely unknown. The model accurately learned both \(\Psi(N) = N(1-N)\) and \(u(N) = N\), with high fidelity in regions where the uniqueness conditions of Theorem \ref{theorem4} were satisfied. This experiment showcased the robustness of the proposed conditions in situations where multiple unknown terms interact within the governing equations.

The Lotka-Volterra predator-prey system also served as a test case involving oscillatory dynamics and nonlinear interactions. The model successfully recovered the hidden terms and parameters, including \(\alpha\) and \(\gamma\), and provided accurate estimated functions for \(\beta x\) and \(\delta y\). The oscillatory nature of the system ensured sufficient observational diversity, allowing the model to converge to a unique solution. These results demonstrated that the method can effectively handle coupled dynamics in ecological and similar systems.

One key observation across the first three test cases was the dependence of model performance on the availability of training data. In the chemotherapy intervention problem, the accuracy of the model decreased for values of \(N < 0.3\) and \(N < 0.6\) in the first and second trials, respectively, since these regions were not represented in the data. In the modified problem, a divergence was observed around \(N \approx 0.5\) where the uniqueness conditions were no longer fully satisfied. Similarly, in the Lotka-Volterra system, divergence is observed around \(N \approx 4.5\) where there is no longer information contained in the training data about the functions. These findings emphasize the importance of ensuring adequate data coverage to fully exploit the theoretical guarantees provided by the uniqueness conditions.

In the remaining test cases, the practical applicability of the theorems was tested. Instead of assuming that the derivatives of the systems of differential equations were known, UPINNs were fit to the trails, providing a surrogate approximation for the derivative through auto-differentiation.

In the Lotka-Volterra system with proportional noise, the performance of UPINNs utilizing Theorem \ref{theorem3} were tested with different amounts of injected noise. As the magnitude of noise was increased, we observed a consistent degradation in accuracy and loss. This shows that although performance suffers from noisy measurements, the method is still able to extract relevant information to a large extent. This observed robustness to noise aligns with previous findings for UPINNs \cite{podina2023universal}, indicating that the altered problem setting does not significantly impact performance.

In the chemotherapy intervention problem with varying dataset lengths, we tested UPINNs using Theorem~\ref{theorem3} across datasets of different sizes. We observed that model performance remained relatively stable as the number of training points decreased, until a threshold was reached beyond which the loss began to increase. These findings are qualitatively consistent with the results in \cite{podina2023universal}, where the authors show that UPINNs perform well on sparse datasets, provided that critical dynamics are adequately sampled. This suggests that our reformulated problem retains the robustness characteristics of UPINNs, even when data sparsity increases.

This work is positioned within the PINN framework established in \cite{raissi2019physics}. We demonstrated sufficient conditions such that the methods of PINNs and UPINNs \cite{raissi2019physics} can be applied simultaneously to solve inverse problems that would be intractable for each model independently. In the following, we discuss solutions to these problems from alternative frameworks.

Sparse Identification of Nonlinear Dynamical Systems (SINDy) \cite{brunton2016sindy} is a widely used framework for discovering governing equations directly from time-series data. It constructs a library \( \Theta(x) \) of candidate functions, such as polynomials, trigonometric terms, or other basis elements, and identifies a sparse vector of coefficients \( \xi \) such that the system dynamics can be expressed as \( \dot{x} = \Theta(x) \xi \). The success of SINDy relies on two key conditions: (i) that the true governing equation lies within the span of the candidate library, and (ii) that the active columns of \( \Theta(x) \) are sufficiently independent to allow for accurate sparse recovery.


The assumptions required by SINDy are different from those we establish in this work. This allows the method to successfully recover the symbolic expression for the differential equation, including the unknown component, even when our uniqueness conditions are not met. We believe that the reason SINDy is able to recover the system even when solutions to the inverse problem are not unique is due to condition (ii): sparse representability within the active columns of \( \Theta(x) \). While an alternative solution may lie within the span of the candidate library, if the condition is met, its representation will be of leaser sparsity and thus not be chosen. For the chemotherapy intervention problem, we conducted a test to confirm these observations. In our experiment, SINDy was able to converge to the correct answer when the uniqueness conditions we established were not met. Although, as shown in Section \ref{non-uniquness discussion}, a counterexample can be constructed where a different system has the same solution, these scenarios are not found in most physical systems \cite{brunton2016sindy}.

The SINDy and PINN/UPINN methods can be seen to be complementary to each other. Both approaches require different sets of assumptions that do not perfectly overlap and are not always met.

The Simformer \cite{gloeckler_all--one_2024} is a probabilistic diffusion model that can be used for both forward problems given a set of parameters and inverse problems, finding a probability distribution for the parameters conditioned on a set of data. FUSE \cite{lingsch_fuse_2024} is a framework built upon the Neural Operator architecture \cite{kovachki_neural_2024} that allows forward and inverse problems to be solved simultaneously. The inverse problem component of FUSE returns a probability distribution over the parameters. Both of these methods find probability distributions over numerical parameters. They do not explicitly find the governing equations followed by the data, instead opting for direct mappings between parameter values and solutions, hence they cannot explicitly find unknown functional terms of governing equations. 

It is also important to note that these models are purely data driven and thus require orders of magnitude more training samples and model parameters than PINNs and SINDy. Where PINNs and SINDy can solve the inverse problem after being trained on as little as one trajectory through the state space, Simformers and FUSE must be trained on hundreds. After Simformers and FUSE have concluded their training phases, they are able to make inferences on trajectories and corresponding parameters for many samples not seen during training, where PINNs and SINDy would be required to retrain on the new data. 

The differences in these models reflect the variety of applications they aim to address. When choosing a model for an inverse problem, all of these models may be considered and chosen based on the constraints and requirements of the setting.





\section{Conclusions and Future Directions}

This study explored the application of UPINNs for solving inverse problems in systems of differential equations. By deriving and applying sufficient conditions for uniqueness, this work demonstrated the feasibility of simultaneously learning unknown constants and functional terms. The experimental validation included the chemotherapy intervention toy problem, its modified version, the Lotka-Volterra predator-prey system, the Lotka-Volterra system with proportional noise, and the chemotherapy intervention problem with varying data set lengths, each of which provided insights into the practical applications of the proposed methods.

The results showed that the uniqueness conditions derived in Theorems \ref{theorem1} to \ref{theorem4} offer a strong theoretical foundation for solving inverse problems involving complex systems. The experiments illustrated the ability of the method to accurately recover unknown coefficients and functional forms in diverse scenarios, provided that the observational data sufficiently spanned the relevant domains. These findings underscore the application of UPINNs to address limitations of traditional data-driven methods, particularly in cases where the governing equations include poorly understood terms or are influenced by sparse data.

Several areas remain open for further exploration. Expanding the theoretical framework to encompass more general functional forms would enhance the applicability of the method to a broader class of problems. In addition, developing techniques to infer these functional forms directly from data, particularly when the governing equations are partially or entirely unknown, would represent a valuable extension.
 Broadening the scope of applications to include fields such as climate modeling, fluid dynamics, and biomedical systems would also provide opportunities to demonstrate the versatility and impact of these methods. This work establishes a foundation for combining physics-informed and universal neural networks to address inverse problems in scientific and engineering domains. Future research in this area could significantly enhance the capability of machine learning models to uncover hidden dynamics, inform decision-making, and advance the understanding of complex systems.

\section*{Impact Statement}
This paper presents work whose goal is to advance the field of Machine Learning. There are many potential societal consequences of our work, none which we feel must be specifically highlighted here.

\section*{Acknowledgments}  
SM acknowledges the support of the Natural Sciences and Engineering Research Council of Canada (NSERC) through the Undergraduate Student Research Awards (USRA) program. MK acknowledges the support of the Natural Sciences and Engineering Research Council of Canada (NSERC) through the Discovery Grant program. 

\bibliographystyle{plainnat} 

\bibliography{bibliography} 
\nocite{*}

\appendix
\section{Proofs of Theorems \ref{theorem3} and \ref{theorem4}}
\addcontentsline{toc}{section}{Appendix}

\begin{remark}
    To make the proof easier to read and to accommodate the margins, we have not included the term \(d(x)\) in the proofs below. Including this term would slightly change the expressions for the unknown terms, but would have no effect on the magnitude of the correction term.
\end{remark}

\begin{theorem*}[Theorem \ref{theorem3} restated]
With the data sets:
\[
\dot{\chi} = [\dot{x}_1, \dots, \dot{x}_m], \quad \chi = [x_1, \dots, x_m], 
\]
For the system described in Theorem \ref{theorem1}, suppose that \( u \) is Lipschitz continuous with Lipschitz constant \( L \).  
Furthermore, suppose that we have \( i,j \in \{1, \dots, m\} \) and \( D \geq 0 \) such that \( \|y_i - y_j\| \leq D \), \( C(x_i) \neq C(x_j) \), and \( g(y_i) \neq 0 \), then for \( i \in \{1, \dots, m\} \), \(\beta\) and \(u(y_i)\) are guaranteed to be within intervals with radii continuously dependent on \(D\).
\end{theorem*}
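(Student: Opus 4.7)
The plan is to mirror the proof of Theorem \ref{theorem1}, but to replace the exact equality $y_i = y_j$ with the Lipschitz approximation $|u(y_i) - u(y_j)| \leq L\|y_i - y_j\| \leq LD$. First I would write the $q^{\text{th}}$-component equation at the two indices:
\begin{align*}
(\dot{x}_i)_q &= \beta\,g(y_i) + C(x_i)\,u(y_i), \\
(\dot{x}_j)_q &= \beta\,g(y_j) + C(x_j)\,u(y_j),
\end{align*}
and introduce a single scalar unknown $\epsilon := u(y_j) - u(y_i)$, which by the Lipschitz hypothesis satisfies $|\epsilon| \leq LD$. In the exact case ($D = 0$) we have $\epsilon = 0$ and we recover the algebraic setup of Theorem \ref{theorem1}; the entire task is to track how a nonzero $\epsilon$ perturbs the recovered $\beta$ and $u$.

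Next I would eliminate the shared term $u(y_i)$ by taking the linear combination $C(x_j)\cdot(\text{first}) - C(x_i)\cdot(\text{second})$ after substituting $u(y_j) = u(y_i) + \epsilon$. The $u(y_i)$ coefficient cancels, and solving for $\beta$ yields
\begin{equation*}
\beta = \frac{C(x_j)(\dot{x}_i)_q - C(x_i)(\dot{x}_j)_q + C(x_i)C(x_j)\,\epsilon}{C(x_j)\,g(y_i) - C(x_i)\,g(y_j)}.
\end{equation*}
As $\epsilon$ ranges over $[-LD, LD]$, this expression sweeps out an interval whose radius is linear in $D$; defining $\bar{\beta}$ as its midpoint (the value obtained with $\epsilon = 0$) gives the explicit bound on $|\beta - \bar{\beta}|$ asserted in Remark \ref{remark1}. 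Back-substituting $\bar{\beta}$ into the $q^{\text{th}}$ equation at any data index $p$ gives $\bar{u}(y_p) = ((\dot{x}_p)_q - \bar{\beta}\,g(y_p))/C(x_p)$, and propagating the uncertainty in $\beta$ through this relation yields $|u(y_p) - \bar{u}(y_p)| \leq |g(y_p)/C(x_p)|\cdot|\beta - \bar{\beta}|$, matching the second inequality in the remark.

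The step I expect to be the main obstacle is establishing that these interval radii depend \emph{continuously} on $D$, which reduces to keeping the denominator $C(x_j)\,g(y_i) - C(x_i)\,g(y_j)$ bounded away from zero. Because $g$ is a known, continuously evaluable function, one may argue that as $D \to 0$ this denominator tends to $g(y_i)(C(x_j) - C(x_i))$, which is nonzero by the hypotheses $g(y_i) \neq 0$ and $C(x_i) \neq C(x_j)$; hence for sufficiently small $D$ the radius is well-defined and vanishes with $D$. A secondary bookkeeping issue is tracking absolute values carefully so the derived bounds line up with the specific expressions of Remark \ref{remark1}, and noting that one is free to optimize the pair $(i,j)$ over the dataset to minimize the radius of the resulting $\beta$-interval.
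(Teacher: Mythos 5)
Your proposal is correct and follows essentially the same route as the paper's appendix proof: write the $q^{\text{th}}$-component equations at the two near-coincident indices, use the Lipschitz bound $|u(y_i)-u(y_j)|\le LD$ to control the mismatch, eliminate the shared unknown to trap $\beta$ in an interval of radius linear in $D$ (with the same $C(x_i)C(x_j)LD$-type numerator as in Remark \ref{remark1}), and then propagate the $\beta$-uncertainty through $u(y_p) = ((\dot{x}_p)_q - \beta g(y_p))/C(x_p)$. The only difference is cosmetic — you eliminate $u(y_i)$ directly via the combination $C(x_j)\cdot(\text{first})-C(x_i)\cdot(\text{second})$, whereas the paper first solves for $u(y_i)$ in terms of $\beta$ and back-substitutes — and your explicit denominator $C(x_j)g(y_i)-C(x_i)g(y_j)$ is in fact the algebraically correct form of the (apparently garbled) denominator printed in the paper.
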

\begin{proof}
W.L.O.G. assume \( C(x_i) > C(x_j) \).
From the governing equation, we have that:
\[
(\dot{x}_i)_q = \beta g(y_i) + C(x_i)u(y_i)
\]
and
\[
(\dot{x}_j)_q = \beta g(y_j) + C(x_j)u(y_j)
\]
Subtracting these equations yields:
\[
(\dot{x}_i - \dot{x}_j)_q = \beta (g(y_i) - g(y_j)) + C(x_i)u(y_i) - C(x_j)u(y_j)
\]
Now, by hypothesis:
\[
\|y_i - y_j\| \leq D \leftrightarrow -D + y_i \leq y_j \leq D + y_i 
\]
\[
\Rightarrow y_j \in [-D + y_i,D + y_i]^k := I
\]
Note that the above inequalities apply element wise.
We will assume that \(C(x_j) > 0\). If it is not, then it flips the inequalities, which has no effect because we find an interval for \(\beta.\)

We now examine the largest possible value for the right-hand side of the equation.
\[
\Rightarrow (\dot{x}_i - \dot{x}_j)_q 
\]
\[
\leq \beta (g(y_i) - g(y_j)) + C(x_i)u(y_i) - \text{min}_{Y \in I}(C(x_j)u(Y))
\]
But since \(Y \in I\) and \(u\) is Lipschitz, we have that
\[
||u(y_i) - u(Y)|| \leq L||y_i - Y|| \leq LD 
\]
\[
\leftrightarrow -LD + u(y_i) \leq u(Y) \leq LD + u(y_i)
\]
Thus
\[
(\dot{x}_i - \dot{x}_j)_q 
\]
\[
\leq \beta (g(y_i) - g(y_j)) + C(x_i)u(y_i) - (C(x_j)(-LD + u(y_i))
\]
\[
=\beta (g(y_i) - g(y_j)) + u(y_i)(C(x_i) - C(x_j)) + C(x_j)LD
\]
Applying the same thing to get "\(\geq\)" yields:
\[
\beta (g(y_i) - g(y_j)) + u(y_i)(C(x_i) - C(x_j)) - C(x_j)LD 
\]
\[
\leq (\dot{x}_i - \dot{x}_j)_q 
\]
\[
\leq \beta (g(y_i) - g(y_j)) + u(y_i)(C(x_i) - C(x_j)) + C(x_j)LD
\]
Rearranging yields:
\[
\frac{(\dot{x}_i - \dot{x}_j)_q - \beta (g(y_i) - g(y_j)) - C(x_j)LD}{C(x_i) - C(x_j)} \leq u(y_i) 
\]
\[
\leq \frac{(\dot{x}_i - \dot{x}_j)_q - \beta (g(y_i) - g(y_j)) + C(x_j)LD}{C(x_i) - C(x_j)}
\]

Rearranging the original equations yields:
\[
\beta = \frac{(\dot{x}_i)_q - C(x_i)u(y_i)}{g(y_i)}
\]
Now if we plug in our found range for \(u(y_i)\), we get that:

\[
\beta =  \frac{(C(x_i)\dot{x}_j - C(x_j)\dot{x}_i)_q}{g(y_i)(C(x_i) - C(x_j)) - (C(x_i) - C(x_j))} + \text{ Correction}
\]

Where 
\[
\text{Correction } \in \bigg[\frac{-C(x_j)C(x_i)LD}{g(y_i)(C(x_i) - C(x_j)) - (C(x_i) - C(x_j))}, \dots
\]
\[
\dots \frac{C(x_j)C(x_i)LD}{g(y_i)(C(x_i) - C(x_j)) - (C(x_i) - C(x_j))}\bigg]
\]
Now we rearrange the original equation to get:
\[
u(y_p) = \frac{(\dot{x}_p)_q - \beta g(y_p)}{C(z_p)} 
\]
\[
= \frac{(\dot{x}_p)_q - \frac{(C(x_i)\dot{x}_j - C(x_j)\dot{x}_i)_q}{g(y_i)(C(x_i) - C(x_j)) - (C(x_i) - C(x_j))} g(y_p)}{C(z_p)} - \dots
\]
\[
\dots \frac{\text{Correction}(g(y_p))}{C(z_p)}
\]
for all \(p \in \{1,\dots,m\}\).\\
Thus, for \( i \in \{1, \dots, m\} \), \(\beta\) and \(u(y_i)\) are guaranteed to be within intervals with radii continuously dependent on \(D\).
\end{proof}

\begin{remark}
If we define
\[
\bar{\beta} := \beta - \text{Correction}
\]
we get that
\[
|\beta - \bar{\beta}| = |\text{Correction}|
\]
Now since we have that
\[
\text{Correction} \in \bigg[\frac{-C(x_j)C(x_i)LD}{g(y_i)(C(x_i) - C(x_j)) - (C(x_i) - C(x_j))}, \dots
\]
\[
\dots \frac{C(x_j)C(x_i)LD}{g(y_i)(C(x_i) - C(x_j)) - (C(x_i) - C(x_j))}\bigg]
\]
we know that
\[
|\text{Correction}| \leq \bigg|\frac{C(x_j)C(x_i)LD}{g(y_i)(C(x_i) - C(x_j)) - (C(x_i) - C(x_j))}\bigg|
\]
Thus
\[
|\beta - \bar{\beta}| \leq \bigg|\frac{C(x_j)C(x_i)LD}{g(y_i)(C(x_i) - C(x_j)) - (C(x_i) - C(x_j))}\bigg|
\]
performing a similar procedure of \(u\) yields:
\[
|u(y_p) - \bar{u}(y_p)| \leq \bigg|\frac{g(y_p)}{C(z_p)}\bigg||\beta - \bar{\beta}|
\]
for all \(p \in \{1,\dots,m\}\).
\end{remark}

\begin{theorem*}[Theorem \ref{theorem4} restated]
With the data sets:
\[
\dot{\chi} = [\dot{x}_1, \dots, \dot{x}_m], \quad \chi = [x_1, \dots, x_m], 
\]
For the system described in Theorem \ref{theorem2}, suppose that \(g(y)\) and \( u(y) \) are Lipschitz continuous with Lipschitz constants \( L_1\) and \(L_2\), respectively.  
Furthermore, suppose that we have \( i,j \in \{1, \dots, m\} \) and \( D \geq 0 \) such that \( \|y_i - y_j\| \leq D \) and \( C(x_i) \neq C(x_j) \), then \(g(y_i)\) and \(u(y_i)\) are guaranteed to be within intervals with radii continuously dependent on \(D\).
\end{theorem*}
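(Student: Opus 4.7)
The plan is to mirror the proof structure of Theorem~\ref{theorem3}, but now handling two Lipschitz functions instead of one, which makes the bookkeeping on the error terms slightly heavier. The starting point is the same: evaluate the $q^{\text{th}}$ component of the governing equation at the two data points $x_i,x_j$ (omitting $d(x)$ as the appendix remark permits) to obtain
\[
(\dot x_i)_q = g(y_i) + C(x_i) u(y_i), \qquad (\dot x_j)_q = g(y_j) + C(x_j) u(y_j),
\]
and then subtract. Unlike in Theorem~\ref{theorem2}, the difference $g(y_i)-g(y_j)$ no longer vanishes, so it must be controlled.

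The next step is to absorb the gap $\|y_i-y_j\|\le D$ using both Lipschitz hypotheses. From $|g(y_i)-g(y_j)|\le L_1 D$ and $|u(y_i)-u(y_j)|\le L_2 D$, I would write $g(y_j)\in[g(y_i)-L_1D,\,g(y_i)+L_1D]$ and $u(y_j)\in[u(y_i)-L_2D,\,u(y_i)+L_2D]$, and then, assuming WLOG $C(x_i)>C(x_j)$ with $C(x_j)>0$ (the other sign cases only flip inequalities and leave the interval radius unchanged), plug these into the subtracted equation. Taking the extremal choices in each bracket gives two-sided bounds
\[
u(y_i)(C(x_i)-C(x_j)) - (L_1 + C(x_j)L_2)D \;\le\; (\dot x_i - \dot x_j)_q \;\le\; u(y_i)(C(x_i)-C(x_j)) + (L_1 + C(x_j)L_2)D.
\]
Rearranging isolates $u(y_i)$ inside an interval of radius $D(L_1+C(x_j)L_2)/|C(x_i)-C(x_j)|$, matching the remark's claim.

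Once $u(y_i)$ is pinned to such an interval, I would recover $g(y_i)$ by substituting back into the first equation, $g(y_i) = (\dot x_i)_q - C(x_i) u(y_i)$. Since this is affine in $u(y_i)$ with slope $-C(x_i)$, the resulting interval for $g(y_i)$ has radius exactly $|C(x_i)|$ times that for $u(y_i)$, giving the bound $|C(x_i)|\,D(L_1+C(x_j)L_2)/|C(x_i)-C(x_j)|$ stated in the remark. Both radii are continuous (in fact linear) in $D$ and vanish as $D\to 0$, recovering Theorem~\ref{theorem2} in the limit.

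The only real obstacle is a presentational one: writing the two-sided bound cleanly so that the terms $L_1 D$ (from $g$) and $C(x_j)L_2 D$ (from $u$) combine additively under the same sign, and then defining the predicted $\bar u(y_i),\bar g(y_i)$ as the midpoints so that the error bounds drop out as the interval half-widths. Taking the minimum over valid index pairs $(i,j)$ tightens the constants but does not affect continuity in $D$, so no additional argument is needed beyond the calculation above.
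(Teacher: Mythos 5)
Your proposal is correct and follows essentially the same route as the paper's appendix proof: subtract the two $q^{\text{th}}$-component equations, control $g(y_i)-g(y_j)$ by $L_1D$ and the $u$-mismatch by $C(x_j)L_2D$, obtain the two-sided bound isolating $u(y_i)$ in an interval of radius $D(L_1+C(x_j)L_2)/|C(x_i)-C(x_j)|$, and then propagate this affinely through $g(y_i)=(\dot x_i)_q - C(x_i)u(y_i)$ to get the $|C(x_i)|$-scaled radius. The matching error bounds and midpoint construction coincide with the paper's remark, so no further comment is needed.
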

\begin{proof}
plugging into the governing equations yields:
\[
(\dot{x}_i)_q = g(y_i) + C(x_i)u(y_i),
\]
and
\[
 (\dot{x}_j)_q = g(y_j) + C(x_j)u(y_j)
\]
subtracting the two equations from each other gives us:
\[
(\dot{x}_i - \dot{x}_j)_q = g(y_i) - g(y_j) + C(x_i)u(y_i) - C(x_j)u(y_j)
\]
Now, by hypothesis:
\[
\|y_i - y_j\| \leq D \leftrightarrow -D + y_i \leq y_j \leq D + y_i
\]
\[
\Rightarrow y_j \in [-D + y_i,D + y_i]^k := I
\]
Note that the above inequalities apply element wise.
We will assume that \(C(x_j) > 0\). If it is not, then it flips the inequalities, which has no effect because we find an interval for \(\beta.\)

We now examine the largest possible value for the right-hand side of the equation.
\[
(\dot{x}_i - \dot{x}_j)_q \leq g(y_i) - g(y_j) + C(x_i)u(y_i) - \text{min}_{Y \in I}(C(x_j)u(Y))
\]
But since \(Y \in I\) and \(u\) is Lipschitz, we have that
\[
||u(y_i) - u(Y)|| \leq L_2||y_i - Y|| \leq L_2D 
\]
\[
\leftrightarrow -L_2D + u(y_i) \leq u(Y) \leq L_2D + u(y_i)
\]
Thus
\[
(\dot{x}_i - \dot{x}_j)_q \leq g(y_i) - g(y_j) + C(x_i)u(y_i) - (C(x_j)(-L_2D + u(y_i))
\]
\[
=g(y_i) - g(y_j) + u(y_i)(C(x_i) - C(x_j)) + C(x_j)L_2D
\]
Now since \(g(y)\) has Lipschitz constant \(L_1\), we have: 
\[
g(y_i) - g(y_j) + u(y_i)(C(x_i) - C(x_j)) + C(x_j)L_2D 
\]
\[
\leq \|g(y_i) - g(y_j)\| + u(y_i)(C(x_i) - C(x_j)) + C(x_j)L_2D
\]
\[
\leq L_1\|y_i - y_j\| + u(y_i)(C(x_i) - C(x_j)) + C(x_j)L_2D
\]
\[
\leq L_1D + u(y_i)(C(x_i) - C(x_j)) + C(x_j)L_2D 
\]
\[
= u(y_i)(C(x_i) - C(x_j)) + D(L_1 + C(x_j)L_2)
\]
Applying the same thing to get "\(\geq\)" yields:
\[
u(y_i)(C(x_i) - C(x_j)) - D(L_1 + C(x_j)L_2)\leq (\dot{x}_i - \dot{x}_j)_q 
\]
\[
\leq u(y_i)(C(x_i) - C(x_j)) + D(L_1 + C(x_j)L_2) 
\]
Rearranging yields:
\[
\frac{(\dot{x}_i - \dot{x}_j)_q - D(L_1 + C(x_j)L_2)}{C(x_i) - C(x_j)} \leq u(y_i) 
\]
\[
\leq \frac{(\dot{x}_i - \dot{x}_j)_q + D(L_1 + C(x_j)L_2)}{C(x_i) - C(x_j)}
\]

Thus there exists a 
\[\text{Correction } \in \bigg[\frac{(- D(L_1 + C(x_j)L_2)}{C(x_i) - C(x_j)},\frac{D(L_1 + C(x_j)L_2)}{C(x_i) - C(x_j)}\bigg]
\]
such that
\[
u(y_i) = \frac{(\dot{x}_i - \dot{x}_j)_q }{C(x_i) - C(x_j)} + \text{Correction}
\]
Rearranging the original equations yields:
\[
g(y_i) = (\dot{x}_i)_q - C(x_i)u(y_i) 
\]
\[
= (\dot{x}_i)_q - C(x_i)(\frac{(\dot{x}_i - \dot{x}_j)_q }{C(x_i) - C(x_j)} + \text{Correction})
\]
\[
 = (\dot{x}_i)_q - \frac{C(x_i)(\dot{x}_i - \dot{x}_j)_q }{C(x_i) - C(x_j)} + C(x_i)(\text{Correction})
\]
Thus, \(g(y_i)\) and \(u(y_i)\) are guaranteed to be within intervals with radii continuously dependent on \(D\).
\end{proof}

\begin{remark}
If we define
\[
\bar{u}(y_p) := u(y_p) - \text{Correction}
\]
for all \(p \in \{1,\dots,m\}\),
we get that
\[
|u(y_p) - \bar{u}(y_p)| \leq |\text{Correction}|
\]
Now since we have that
\[
\text{Correction} \in \bigg[\frac{(- D(L_1 + C(x_j)L_2)}{C(x_i) - C(x_j)},\frac{D(L_1 + C(x_j)L_2)}{C(x_i) - C(x_j)}\bigg]
\]
we know that
\[
|\text{Correction}| \leq \bigg|\frac{D(L_1 + C(x_j)L_2)}{C(x_i) - C(x_j)}\bigg|
\]
Thus
\[
|u(y_p) - \bar{u}(y_p)| \leq \bigg|\frac{D(L_1 + C(x_j)L_2)}{C(x_i) - C(x_j)}\bigg|
\]
performing a similar procedure of \(\beta\) yields:
\[
|\beta - \bar{\beta}| \leq \bigg|\frac{C(x_i)D(L_1 + C(x_j)L_2)}{C(x_i) - C(x_j)}\bigg|
\]
\end{remark}

\end{document}